\documentclass{article}

\PassOptionsToPackage{numbers,compress}{natbib}
\usepackage[table]{xcolor}
\usepackage[main, final]{neurips_2026}

\usepackage[utf8]{inputenc} 
\usepackage[T1]{fontenc}    
\usepackage{hyperref}       
\usepackage{url}            
\usepackage{booktabs}       
\usepackage{amsfonts}       
\usepackage{nicefrac}       
\usepackage{microtype}      
\usepackage{xcolor}         

\usepackage{amsmath}
\usepackage{color}
\usepackage{multirow}
\usepackage{enumitem}
\usepackage{graphicx}
\usepackage{tcolorbox}

\usepackage{algorithm}
\usepackage{algorithmic}

\usepackage{wrapfig}
\usepackage{fontawesome5}

\definecolor{myred}{HTML}{E74C3C}
\definecolor{myblue}{HTML}{3498DB}
\definecolor{mygreen}{HTML}{27AE60}
\definecolor{myorange}{HTML}{ff8800}
\definecolor{mypurple}{HTML}{9B59B6}
\definecolor{mypink}{HTML}{f7acb9}

\definecolor{headercolor}{RGB}{100,100,100}
\definecolor{bluecolor}{RGB}{0,100,200}
\definecolor{orangecolor}{RGB}{200,100,50}
\definecolor{redcolor}{RGB}{200,0,0}

\newcommand{\red}[1]{\mytextbox{myred}{\textbf{\textcolor{myred}{#1}}}}
\newcommand{\blue}[1]{\mytextbox{myblue}{\textbf{\textcolor{myblue}{#1}}}}
\newcommand{\green}[1]{\mytextbox{mygreen}{\textbf{\textcolor{mygreen}{#1}}}}

\newcommand{\purple}[1]{\mytextbox{mypurple}{\textbf{\textcolor{mypurple}{#1}}}}

\usepackage{tcolorbox}
\newcommand{\mytextbox}[2]{%
  \colorbox{#1!15}{\strut #2}%
}

\newcounter{assumption}
\newenvironment{assumption}[1][]{\refstepcounter{assumption}\par\medskip\noindent\textbf{Assumption~\theassumption\if\relax\detokenize{#1}\relax\else\ (#1)\fi.}\itshape\quad}{\par\medskip}

\newcounter{proposition}
\newenvironment{proposition}[1][]{\refstepcounter{proposition}\par\medskip\noindent\textbf{Proposition~\theproposition\if\relax\detokenize{#1}\relax\else\ (#1)\fi.}\itshape\quad}{\par\medskip}

\newcounter{corollary}
\newenvironment{corollary}[1][]{\refstepcounter{corollary}\par\medskip\noindent\textbf{Corollary~\thecorollary\if\relax\detokenize{#1}\relax\else\ (#1)\fi.}\itshape\quad}{\par\medskip}

\newenvironment{proof}{\par\medskip\noindent\textit{Proof.}\quad}{\hfill$\square$\par\medskip}

\title{A\textsuperscript{2}TGPO: Agentic Turn-Group Policy Optimization with Adaptive Turn-level Clipping
}

\author{%
  Dingwei Chen{$^{\spadesuit\Diamond}$\thanks{~~Work was done during the internship at Tencent Inc.}}, Zefang Zong{$^{\spadesuit}$}, Zhipeng Ma{$^{\spadesuit}$}, Leo Luo{$^{\spadesuit}$}, Yang Li{$^{\spadesuit}$} \\
  \textbf{Chengming Li}{$^{\heartsuit}\thanks{~~Corresponding author.}$}, \textbf{Peng Chen}{$^{\spadesuit}$}, \textbf{Jie Jiang}{$^{\spadesuit}\footnotemark[2]$} \\
  $^\spadesuit$Tencent Inc   $^\Diamond$The Chinese University of Hong Kong  $^\heartsuit$Shenzhen MSU-BIT University\\
  \texttt{cuso4cdw@gmail.com}, \texttt{licm@smbu.edu.cn} \\
  \texttt{\{willzong,thomasyngli\}@tencent.com} \\
  \faGithub \href{https://github.com/CuSO4-Chen/A-TGPO}{\textcolor{cyan}{\footnotesize{\underline{\texttt{CuSO4-Chen/A-TGPO}}}}}
}

\begin{document}

\maketitle

\begin{abstract}

Reinforcement learning for agentic large language models (LLMs) typically relies on a sparse, trajectory-level outcome reward, making it difficult to evaluate the contribution of individual tool-calls within multi-turn interactions. Existing approaches to such process credit assignment either depend on separate external process reward models that introduce additional consumption, or tree-based structural rollout that merely redistributes the outcome signal while constraining trajectory diversity. A promising alternative leverages the per-turn change in the policy's predicted probability of the ground-truth, termed Information Gain (IG), as an intrinsic process signal without an external evaluator. However, prior work on leveraging IG signals within the RL training loop faces three systematic challenges: normalizing across turns that face heterogeneous positional contexts can distort the relative standing of individual turns, accumulating a variable number of terms causes advantage magnitudes to drift with trajectory depth, and a fixed clipping range governs policy updates identically for turns with vastly different IG signals. In this paper, we propose \textbf{A\textsuperscript{2}TGPO} (\textbf{A}gentic \textbf{T}urn-\textbf{G}roup \textbf{P}olicy \textbf{O}ptimization with \textbf{A}daptive Turn-level Clipping), which retains IG as the intrinsic signal but re-designs how it is normalized, accumulated, and consumed: (i) \textbf{turn-group normalization}: normalizes IG within each (prompt, turn-index) group so that each turn is compared only against peers at the same interaction depth; (ii) \textbf{variance-rescaled discounted accumulation}: divides cumulative normalized IG by square root of accumulated terms to keep advantage magnitudes comparable across turn positions; and (iii) \textbf{adaptive turn-level clipping}: modulates each turn's clipping range based on its normalized IG, widening the update region for informative turns and narrowing it for uninformative ones. On seven single-hop and multi-hop QA benchmarks across three backbones, A\textsuperscript{2}TGPO consistently outperforms prior strong baselines, improving over existing RL methods by $+1.75$ on multi-hop and $+1.69$ on single-hop on average.
\end{abstract}

\section{Introduction}

Agentic large language models (Agentic LLMs) that utilize external tools to engage in multi-turn interactions have demonstrated strong capabilities in complex tasks such as web navigation, code generation, and open-domain question answering~\cite{yao2023react,li2025miv,li2026make,jin2025search}. To enhance an agent's tool-use ability, reinforcement learning (RL) has emerged as a powerful paradigm. Driven by the success of rule-based outcome verification in LLM reasoning~\cite{guo2025deepseek,yu2025dapo,zheng2025groupsequencepolicyoptimization}, this critic-free approach is naturally extended to agentic settings~\cite{jin2025search, wang2025ragen}. As agentic rollouts further introduce a turn-based interaction structure that interleaves model-generated reasoning with tool responses, a range of specialized optimization is designed for this paradigm from trajectory selection and rollout mechanism ~\cite{dong2025agenticarpo,dong2025agenticaepo,feng2025group,zong2026at2poagenticturnbasedpolicy,shen2025carl,wu2025webdancer}. Despite these efforts, they still drive the policy with a single trajectory-level outcome, providing no mechanism to distinguish tool-calls that genuinely advance toward the answer from those that merely prolong the interaction. Addressing this limitation requires a process signal that can evaluate each turn's role in advancing toward the final answer, which is also known as process credit assignment.

\begin{figure}[t]
  \centering
  \includegraphics[width=0.9\linewidth]{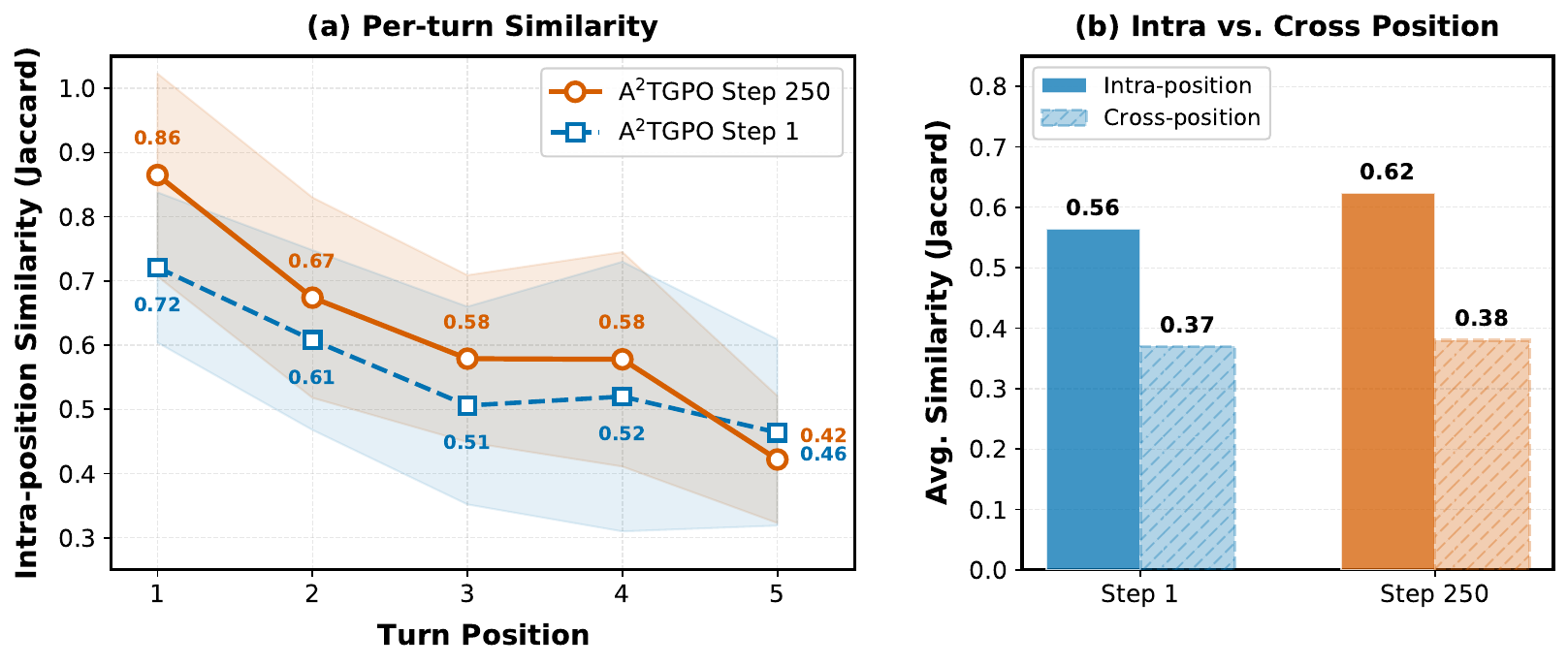}
  \vspace{-4.7mm}
  \caption{\textit{Left}: Per-turn intra-position context similarity between rollouts of the same prompt. \textit{Right}: Overall intra-position vs.\ cross-position similarity. Rollouts at the same turn share substantially more similar contexts than those at different turns.}
  \label{fig:context-similarity}
  \vspace{-4.7mm}
\end{figure}

Existing routes to such per-turn supervision fall into three categories. Process reward models (PRMs)~\citep{lightman2023let,wang2024math} score process steps to supply dense reward signals but require a separately trained external evaluator and carry non-trivial risks of reward hacking. Tree-based methods~\citep{hou2025treerl,ji2025treesearchllmagent,yang2025treerpo} reorganize rollouts into shared-prefix trees and redistribute the outcome reward across branches, eliminating the external evaluator but merely reallocating the outcome signal while constraining trajectory diversity. Regarding these limitations, a third line of work derives per-turn credit from model-intrinsic signals without external evaluators. GiGPO~\citep{feng2025group} assigns a group-relative advantage to turns sharing the same state across trajectories, though identifying equivalent states in open-ended generation remains challenging. Along this line, recent work further proposes to measure the change in the policy's predicted probability of the ground-truth answer after each turn, termed \emph{Information Gain} (IG), as an intrinsic per-turn process signal. For example, IGPO~\citep{wang2026information} normalizes IG signals across all turns and derives turn-level advantages through discounted accumulation.

However, prior work on leveraging IG as a per-turn process signal in the RL training loop faces three systematic challenges. First, normalizing IG across all turns of all rollouts sharing a prompt pools turn positions that face fundamentally different contexts, overlooking the incomparability of information gains computed under heterogeneous states and distorting the relative standing of individual turns. Second, a discounted cumulative advantage that sums a variable number of normalized IG terms along the trajectory causes advantage magnitudes to vary inconsistently with trajectory depth rather than remaining on a comparable scale across turn positions. Third, a fixed clipping range governs policy updates identically for turns with vastly different IG signals, preventing the optimizer from modulating update intensity according to per-turn informativeness.

To address these challenges, we propose \textbf{A\textsuperscript{2}TGPO} (\textbf{A}gentic \textbf{T}urn-\textbf{G}roup \textbf{P}olicy \textbf{O}ptimization with \textbf{A}daptive Turn-level Clipping), which retains IG as the intrinsic per-turn signal but re-designs how it is normalized, accumulated, and consumed by the policy optimization. Our key observation is that the \emph{turn-index} provides a natural unit for both normalization and credit assignment in agentic rollouts: trajectories sharing the same prompt and having executed the same number of interactions tend to be in similar contexts and states, especially before trajectories branch substantially at early turns, and thus form a meaningful comparison group. We empirically verify this in Figure~\ref{fig:context-similarity}: rollouts at the same turn position share high contextual similarity that decreases with depth (left; 0.86 at turn~1 declining to 0.42 at turn~5), and overall intra-position similarity substantially exceeds cross-position similarity (right; 0.62 vs.\ 0.38), confirming that turn-group comparison is both natural and well-founded (detailed analysis in Appendix~\ref{sec:context-similarity}). Building on this, A\textsuperscript{2}TGPO introduces three components. To resolve the incomparability caused by pooled normalization, we design \emph{\textbf{IG-based turn-group normalization}} that normalizes IG within each (prompt, turn-index) group so that each turn is evaluated only against peers at the same interaction depth. To stabilize the scale of discounted cumulative advantages across trajectory depths, we propose \emph{\textbf{discounted cumulative advantage with variance rescaling}} that divides the cumulative normalized IG by the square root of the number of accumulated terms, keeping advantage magnitudes comparable across turn positions. To achieve the adaptive policy update, we further introduce \emph{\textbf{IG-based adaptive turn-level clipping}} that re-uses the normalized IG to modulate each turn's clipping range, widening the clipping range for informative turns and narrowing it for uninformative ones. Furthermore, A\textsuperscript{2}TGPO operates the importance-sampling ratio and clipping at the turn level rather than the token or sequence level, aligning the optimization granularity with the natural interaction structure of agentic rollouts. Our main contributions are summarized as follows:

\begin{itemize}[leftmargin=*] 
      \item We design a turn-group normalization scheme that normalizes IG within each $(q, t)$ group. Each turn is evaluated only against positional peers at the same interaction depth, eliminating the incomparability inherent in pooled normalization.
      \item We propose a variance-rescaled discounted accumulation to keep advantage magnitudes comparable across turn positions, and further introduce an adaptive turn-level clipping mechanism that adaptively modulates each turn's clip range based on its normalized IG.
      \item We evaluate A\textsuperscript{2}TGPO on seven single-hop and multi-hop open-domain QA benchmarks across three backbones and show that it consistently outperforms prior strong baselines, improving over existing RL methods by $+1.75$ on multi-hop and $+1.69$ on single-hop on average.
\end{itemize}

\section{Related Work}

\textbf{Reinforcement Learning in LLMs and Agents.} Reinforcement learning has become a cornerstone for enhancing LLM reasoning and alignment~\cite{yu2025dapo,zheng2025groupsequencepolicyoptimization,jin2025search,wang2025ragen}. Building on PPO-based RLHF~\cite{schulman2017proximal,christiano2017deep,stiennon2020learning}, recent critic-free methods such as GRPO~\cite{guo2025deepseek} and DAPO~\cite{yu2025dapo} estimate advantages from group-relative comparisons and progressively refine the clipping granularity~\cite{zheng2025groupsequencepolicyoptimization} with their verifiable-reward~\cite{shao2024deepseekmathpushinglimitsmathematical, guo2025deepseek}. Extending this foundation, a growing line of work tailors optimization to the agentic paradigm. Search-R1~\cite{jin2025search} integrates search actions into the RL loop and establishes an early template for tool-augmented agent training. ARPO~\cite{dong2025agenticarpo} exploits the entropy spike that follows tool responses to trigger selective rollout branching at uncertain decision points, while AEPO~\cite{dong2025agenticaepo} further curbs over-branching through entropy-balanced sampling and updates. Although these outcome-driven methods leverage sampling dynamics or loss optimization to improve agentic training, they still rely on a single trajectory-level reward, leaving per-turn evaluation largely unresolved.

\textbf{Credit Assignment in Agentic Reinforcement Learning.} Outcome-driven agentic RL typically provides only a sparse, trajectory-level reward, which is too coarse to assign credit across long multi-turn interactions. One line of work addresses this via process reward models (PRMs) that score process steps to supply dense reward~\cite{lightman2023let, wang2024math, setlur2025rewarding, choudhury2025process}, but such approaches require a separately loaded reward model. Another route organizes rollouts into tree structures and redistributes process credit across shared prefixes and branches~\cite{hou2025treerl, xie2024monte, yang2025treerpo, ji2025treesearchllmagent}. While concurrent efforts further improve the exploration dynamics through entropy-guided branch expansion~\cite{shen2025carl, zong2026at2poagenticturnbasedpolicy}, this paradigm simply reallocates the outcome reward among nodes and constrains the diversity of trajectories. Besides the two paradigms above, a third line of work designs intrinsic signals without external evaluators~\citep{feng2025group,wang2026information}. GiGPO~\cite{feng2025group} introduces a hierarchical grouping scheme that pools same-state actions across trajectories to yield finer-grained credit. IGPO~\cite{wang2026information} quantifies per-turn information gain signals to estimate the advantage of each tool call. However, these methods still lack an objective comparison for estimation across turns, making it difficult to calibrate the relative importance of individual tool calls.


\section{Preliminaries}
\label{sec:prelim-igpo}

\textbf{Task Definition.} Following the agentic RL formulation of prior
work~\citep{jin2025search, dong2025agenticarpo, zong2026at2poagenticturnbasedpolicy, wang2026information}, a language model policy $\pi_\theta$ answers a query $q$ through multi-turn interaction with a tool environment $\mathcal{E}$. Given a dataset $\mathcal{D} = \{(q, a^\star)\}$, the agent produces a rollout $\tau \sim \pi_\theta(\cdot \mid q, \mathcal{E})$ concluding with a prediction $\hat{a}$ and receives a trajectory-level reward $R(\tau)$ measuring the correctness of $\hat{a}$ against $a^\star$. The learning objective is as follows:
  \begin{equation}
    \mathcal{J}(\pi_\theta)
    = \mathbb{E}_{q \sim \mathcal{D}}\,
      \mathbb{E}_{\tau \sim \pi_\theta(\cdot \mid q,\, \mathcal{E})}
      \bigl[R(\tau)\bigr].
    \label{eq:task-objective}
  \end{equation}
\textbf{Multi-turn Rollout.} Following the ReAct paradigm~\citep{yao2023react}, at each turn $t \in \{1, \ldots, T\}$ the policy samples a model-generated segment $y_t$, and the environment returns an observation $o_t$ when $y_t$ is a tool call. The entire trajectory is $\tau = (y_1, o_1, \ldots, y_{T-1}, o_{T-1}, y_T)$, with probability $\pi_\theta(\tau \mid q) = \prod_{t=1}^{T} \pi_\theta\!\bigl(y_t \mid q, y_{<t}, o_{<t}\bigr)$. Only tokens in $\{y_t\}_{t=1}^{T}$ contribute to the policy gradient; tokens in the observations $\{o_t\}$ are produced by $\mathcal{E}$ and masked out in the loss calculation. For each query $q$, $G$ trajectories $\{\tau_i\}_{i=1}^{G}$ are sampled from $\pi_{\theta_{\text{old}}}$ for group-based policy optimization~\citep{shao2024deepseekmathpushinglimitsmathematical, guo2025deepseek}.

\textbf{Turn-level Information Gain.} Since a trajectory-level reward $R(\tau_i)$ conveys little information about the value of the individual $T_i$ turns or their respective contributions to the final outcome, previous work~\cite{wang2026information} introduces a turn-level signal by quantifying the change in the policy's assigned probability of the ground-truth answer $a$ at each turn. Let $\tau_{i,\leq t}=(y_{i,1}, o_{i,1}, \dots, y_{i,t}, o_{i,t})$ denote the prefix of $\tau_i$ through turn $t$. The length-normalized conditional probability of $a=(a_1,\dots,a_L)$ is
\begin{equation}
  \pi_\theta\!\left(a \mid q,\,\tau_{i,\leq t}\right) \;=\; \exp\!\left(\frac{1}{L}\sum_{j=1}^{L}\log \pi_\theta\!\left(a_j \mid q,\,\tau_{i,\leq
  t},\,a_{<j}\right)\right),
  \label{eq:teacher-forced-prob}
\end{equation}
and the \emph{information gain} of turn $t$ is defined as follows:
\begin{equation}
  \mathrm{ig}_{i,t} \;=\; \pi_\theta\!\left(a \mid q,\,\tau_{i,\leq t}\right) \;-\; \pi_\theta\!\left(a \mid q,\,\tau_{i,\leq t-1}\right), \qquad 1
   \leq t < T_i.
  \label{eq:ig-def}
\end{equation}
The signal $\mathrm{ig}_{i,t}$ is computed from the policy's own likelihoods, treated as a stop-gradient quantity, and adds one forward pass over $a$ per turn. For $t = 1$, $\mathrm{ig}_{i,1}$ measures the gain from the first tool call relative to the query-only baseline $\pi_\theta(a \mid q)$.

\textbf{Policy Update with Turn-level Advantages.} Following the example of IGPO, it assembles a reward vector $\mathbf{r}_i=(r_{i,1},\dots,r_{i,T_i})$ with $r_{i,t}=\mathrm{ig}_{i,t}$ for $t<T_i$ and $r_{i,T_i}=R(\tau_i)$. All turn rewards in the group are as follows:
\begin{equation}
  \mathcal{R} \;=\; \{\,r_{i,t} : i = 1,\dots,G;\; t = 1,\dots,T_i\,\}
  \label{eq:igpo-group}
\end{equation}
and then are jointly $z$-normalized and propagated backward through discounted accumulation as follows:
\begin{equation}
  \tilde{A}_{i,t} \;=\; \sum_{k=t}^{T_i} \gamma^{\,k-t}\,\frac{r_{i,k} - \mathrm{mean}(\mathcal{R})}{\mathrm{std}(\mathcal{R})},
  \label{eq:igpo-adv}
\end{equation}
where $\gamma\in(0,1]$ is a discount factor. $\tilde{A}_{i,t}$ replaces the trajectory-level advantage in the standard clipped policy objective, so that process turns receive finer-grained credit than GRPO, while the clipping range $\epsilon$ remains fixed across all turns and all samples.

\section{Methodology}
\begin{figure}[t]
    \centering
    \includegraphics[width=1.0\linewidth]{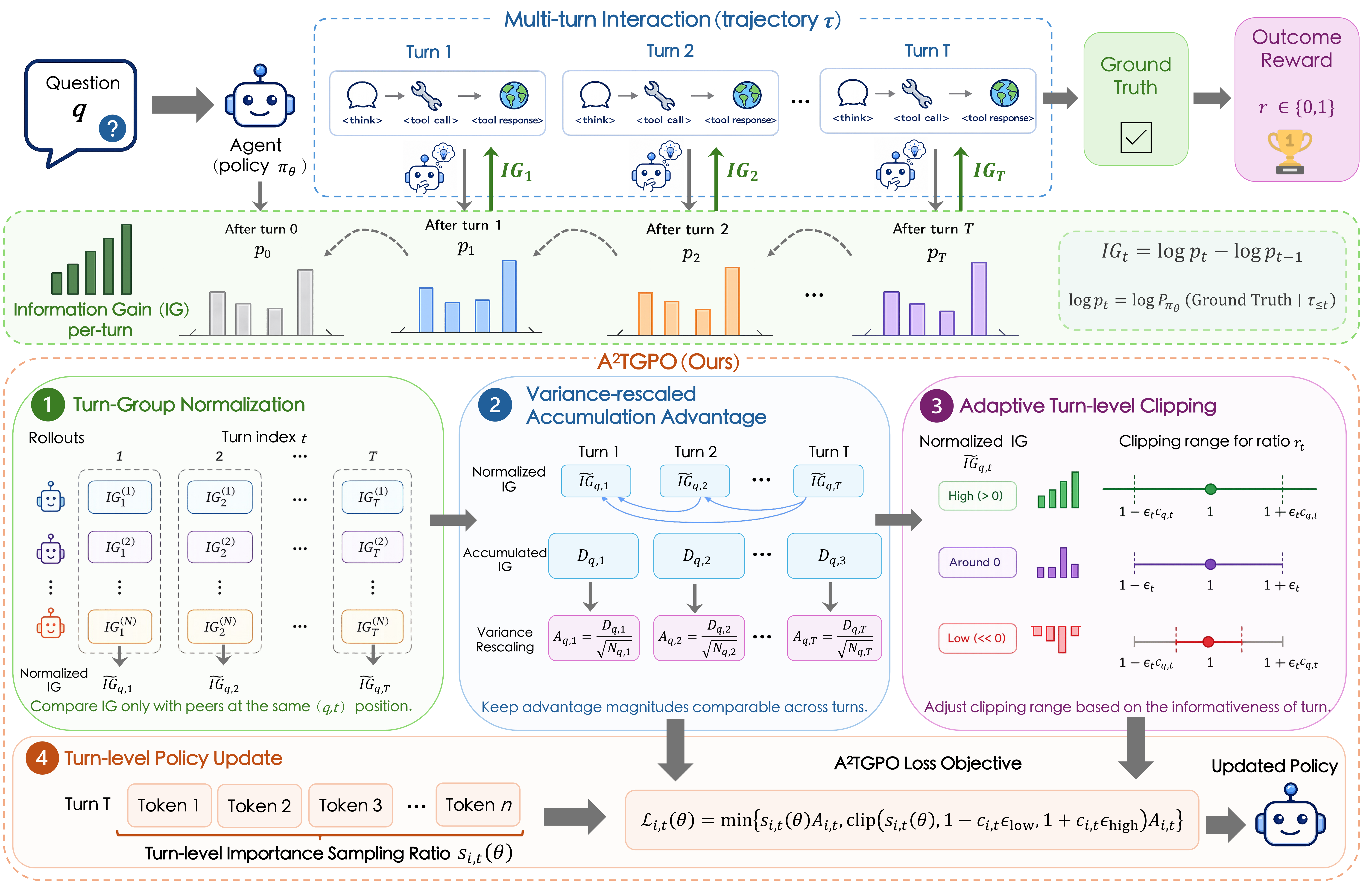}
    \vspace{-4.7mm}
    \caption{The framework of A\textsuperscript{2}TGPO. Raw IG signals are first normalized within each turn group, then flow into discounted accumulation with variance rescaling to produce the turn-level advantage $\widehat{A}_{i,t}$, while a sigmoid mapping yields the adaptive clip scale $c_{i,t}$. Both are consumed by the turn-level clipped policy loss.}
    \label{fig:framework}
    \vspace{-4.5mm}
  \end{figure}

This section presents A\textsuperscript{2}TGPO, building on the IG-based paradigm introduced in Section~\ref{sec:prelim-igpo}. An overview of the framework is shown in Figure~\ref{fig:framework}. The following subsections illustrate the three components in turn.

\subsection{IG-based Turn-Group Normalization}
\label{sec:method-tgnorm}

Given the per-turn information gain $\mathrm{ig}_{i,t}$ computed following the same procedure as IGPO (Eq.~\eqref{eq:ig-def}), A\textsuperscript{2}TGPO normalizes each per-turn information gain against a group of peers that share both the prompt $q$ and the specific turn index $t$. For each prompt $q$ and each turn index $t$, we define the turn-group as follows:
\begin{equation}
  \mathcal{G}_{q,t} \;=\; \{\, \mathrm{ig}_{i,t} \;:\; i = 1,\dots,G,\; t \leq T_i\,\},
  \label{eq:tg-group}
\end{equation}
where rollouts that complete before reaching turn $t$ do not contribute to $\mathcal{G}_{q,t}$. Since rollout lengths vary, $|\mathcal{G}_{q,t}|$ decreases with $t$; when $|\mathcal{G}_{q,t}| \leq 1$, we set $\widehat{\mathrm{ig}}_{i,t} = 0$, relying solely on the outcome reward for that turn (see Appendix~\ref{sec:theory-tg-robust} for a robustness analysis). The turn-group normalized information gain is then defined with $z$-normalization:
\begin{equation}
  \widehat{\mathrm{ig}}_{i,t} \;=\; \frac{\mathrm{ig}_{i,t} - \mathrm{mean}(\mathcal{G}_{q,t})}{\mathrm{std}(\mathcal{G}_{q,t})}.
  \label{eq:tg-normed}
\end{equation}
Grouping by $(q, t)$ reflects the empirical observation in agentic settings that, trajectories sharing the same prompt and having executed the same number of interactions tend to be in similar contexts and states, especially before trajectories branch substantially at early turns. The pooled normalization in Eq.~\eqref{eq:igpo-group}, however, computes a single mean and variance across all turn positions, conflating signals from inherently different regimes: early turns operate on minimal evidence while later turns condition on accumulated tool responses, so their information-gain distributions already differ in both location and scale. This mismatch is compounded by the chain-like dependence of information gains: a tool call that returns highly supportive content absorbs much of the available information, may systematically lower the expected gain at subsequent turns even when those turns are themselves effective. Because such pooling distorts the relative standing of individual turns, A\textsuperscript{2}TGPO normalizes within each $(q,t)$ group instead, evaluating each turn against peers that share its position and capturing what constitutes a superior or inferior tool call \emph{at that specific position}. The normalized signal $\widehat{\mathrm{ig}}_{i,t}$ is dimensionless and position-conditional, and serves as the turn-level input to the advantage construction developed in Section~\ref{sec:method-varscale}.
\subsection{Discounted Cumulative Advantage with Variance Rescaling}
\label{sec:method-varscale}

With the normalized information gains $\widehat{\mathrm{ig}}_{i,t}$ from Eq.~\eqref{eq:tg-normed}, we construct a turn-level advantage that propagates per-turn credit backward along the trajectory while equalizing the scale of early-turn and late-turn contributions. For each turn $t \in \{1,\dots,T_i-1\}$ of trajectory $\tau_i$ (except $T_i$ for final answer), the backward cumulative information gain is defined as follows:
\begin{equation}                                          
  D_{i,t} \;=\; \sum_{k=t}^{T_i-1} \gamma^{\,k-t}\,\widehat{\mathrm{ig}}_{i,k},
  \qquad                                                       
  n_{i,t} \;=\; T_i - t,
  \label{eq:cumulative-ig}
\end{equation}          
To further capture the long-horizon dependencies, $D_{i,t}$ accumulates the normalized signals from all downstream turns within the same trajectory, propagating credit backward from later turns toward earlier ones, where $\gamma\in(0,1]$ is a discount factor that down-weights distant turns. In the baseline formulation (Eq.~\eqref{eq:igpo-adv}), the discounted cumulation sums a variable number of terms across turn positions, causing advantage magnitudes to vary inconsistently with trajectory depth. Since the variance of the sum grows linearly in $n_{i,t}$ under mild independence assumptions, rescaling by $\sqrt{n_{i,t}}$ yields approximately constant variance across all turn positions, keeping advantage magnitudes comparable regardless of trajectory depth (see Appendix~\ref{sec:theory-variance} for a formal derivation).

This per-turn credit is combined with the outcome reward to enhance the outcome orientation. Let $\widehat{R}_i$ denote the outcome reward $R(\tau_i)$ after per-prompt GRPO-like normalization across the $G$ trajectories sharing $q$. The turn-level advantage used by A\textsuperscript{2}TGPO is computed as
\begin{equation}
  \widehat{A}_{i,t} \;=\;
  \begin{cases}
  \dfrac{D_{i,t}}{\sqrt{n_{i,t}}} \;+\; \widehat{R}_i, & 1 \leq t \leq T_i - 1, \\[6pt]
  \widehat{R}_i, & t = T_i,
  \end{cases}
  \label{eq:atgpo-adv}
\end{equation}
where the final answer turn $T_i$ conveys no defined information gain and inherits only the outcome signal, while process turns combine the rescaled backward cumulative credit with the outcome term. 

\subsection{IG-based Adaptive Turn-level Clipping}
\label{sec:method-adaclip}

We refine the clipping range of the policy loss on a per-turn basis, using the normalized information gain $\widehat{\mathrm{ig}}_{i,t}$ from Eq.~\eqref{eq:tg-normed} to assign the policy a wider update range on turns that yield higher information gain while a narrower range on turns where the gain is low or negative. Furthermore, we adopt turn-level policy optimization instead of token- or sequence- level in previous work~\cite{guo2025deepseek, zheng2025groupsequencepolicyoptimization} to align the optimization objectives with the turn-based interaction structure of agentic LLMs.

Concretely, for turn $t$ of rollout $\tau_i$, the turn-level importance-sampling ratio is computed as the length-normalized geometric mean of the per-token ratios:
\begin{equation}
  s_{i,t}(\theta) \;=\; \exp\!\left(\frac{1}{|y_{i,t}|}\sum_{k=1}^{|y_{i,t}|} \log\frac{\pi_\theta(y_{i,t,k} \mid \cdot)}{\pi_{\theta_{\mathrm{old}}}(y_{i,t,k} \mid \cdot)}\right),
  \label{eq:turn-is}
\end{equation}
where $|y_{i,t}|$ is the number of generated tokens in turn $t$. The ratio $s_{i,t}(\theta)$ is shared by all tokens within the same turn. The effective clipping range of $s_{i,t}(\theta)$ is then gated by a sigmoid of $\widehat{\mathrm{ig}}_{i,t}$ as follows:
\begin{equation}                                                   
  c_{i,t} \;=\; 1 + \beta\left(2\sigma(\widehat{\mathrm{ig}}_{i,t}) - 1\right),
  \label{eq:clip-scale}                                                        
\end{equation}       
where $\sigma$ is the logistic sigmoid and $\beta \in [0, 1)$ is a hyperparameter that controls the maximum relative deviation of the clipping range from its base value. Specifically, the scale factor $c_{i,t}$ is monotonically increasing in $\widehat{\mathrm{ig}}_{i,t}$ and bounded within $(1-\beta,\,1+\beta)$: turns with higher information gain in rank receive a wider clipping range while turns with lower or negative gain receive a narrower one.                   
                                               
Inspired by DAPO~\citep{yu2025dapo}, we use $(\epsilon_{\mathrm{low}}, \epsilon_{\mathrm{high}})$ to denote the base asymmetric clipping bounds. The effective per-turn bounds used by A\textsuperscript{2}TGPO are $c_{i,t}\,\epsilon_{\mathrm{low}}$ and $c_{i,t}\,\epsilon_{\mathrm{high}}$. The A\textsuperscript{2}TGPO loss is defined by substituting the turn-level ratio from Eq.~\eqref{eq:turn-is} and the turn-level advantage $\widehat{A}_{i,t}$ from Eq.~\eqref{eq:atgpo-adv} into the clipped policy objective, as follows:                \begin{equation}
    \begin{aligned}
    \mathcal{L}_{\mathrm{A^2TGPO}}(\theta)
    &= -\,\mathbb{E}_{q,\{\tau_i\}_{i=1}^{G}}\!\Bigg[ \frac{1}{G}\sum_{i=1}^{G}\frac{1}{|\mathcal{M}(\tau_i)|}\sum_{(t,k) \in \mathcal{M}(\tau_i)} \min\!\Big( s_{i,t}(\theta)\, \widehat{A}_{i,t}, \\
    &\qquad\qquad\qquad\quad \mathrm{clip}\!\Big(s_{i,t}(\theta),\;1 - c_{i,t}\epsilon_{\mathrm{low}},\;1 + c_{i,t}\epsilon_{\mathrm{high}}\Big) \widehat{A}_{i,t} \Big) \Bigg],
    \end{aligned}
    \label{eq:atgpo-loss}
\end{equation}
where $|\mathcal{M}(\tau_i)|$ is the total number of model-generated tokens in $\tau_i$. The advantage $\widehat{A}_{i,t}$ is shared by all tokens within turn $t$, and $c_{i,t}$ enters Eq.~\eqref{eq:atgpo-loss} only as a scaling factor on the clipping bounds, contributing no gradient with respect to $\theta$ since $\widehat{\mathrm{ig}}_{i,t}$ is a stop-gradient quantity (Section~\ref{sec:prelim-igpo}).

\section{Experiments}

\subsection{Experiment Settings}

\textbf{Datasets.}  We evaluate A\textsuperscript{2}TGPO in a tool-integrated search setting and leverage the retrieval environment following Search-R1~\citep{jin2025search}, which designs a local search engine as an external tool during both training and evaluation. Seven open-domain question answering benchmarks are used, organized into two groups by reasoning depth. \textbf{Multi-hop} benchmarks consist of HotpotQA~\citep{yang2018hotpotqa}, 2WikiMultihopQA~\citep{ho2020constructing}, MuSiQue~\citep{trivedi2022musique}, and Bamboogle~\citep{press2023measuring}. \textbf{Single-hop} benchmarks consist of Natural Questions (NQ)~\citep{kwiatkowski2019natural}, TriviaQA~\citep{joshi2017triviaqa}, and PopQA~\citep{mallen2023llm_memorization}. We train and evaluate on three backbones: Qwen3-4B, Qwen3-8B, and Qwen2.5-7B. We report \textbf{Exact Match (EM)} as the primary metric on every benchmark as well as the average accuracy across all evaluation samples. This experiment setting deliberately avoids proprietary APIs and heavyweight tool infrastructure, keeping the evaluation reproducible and concentrating on the progress of the RL algorithm.

\textbf{Baselines.} We first include ReAct~\citep{yao2023react} as a non-RL reference that prompts the backbone to interleave reasoning and tool calls without training. Furthermore, we compare A\textsuperscript{2}TGPO against a range of RL methods spanning recent advances in policy optimization and agentic training. The first part consists of several widely-used RLVR baselines: GRPO~\citep{guo2025deepseek}, DAPO~\citep{yu2025dapo} and GSPO~\citep{zheng2025groupsequencepolicyoptimization}. Another part consists of several recently promising agentic RL baselines: Tree-GRPO~\citep{ji2025treesearchllmagent}, GiGPO~\citep{feng2025group}, IGPO~\citep{wang2026information}, AEPO~\citep{dong2025agenticaepo}. Similar to previous work~\cite{zong2026at2poagenticturnbasedpolicy}, we also observed during our reproduction that Tree-GRPO frequently crashed during training on Qwen3 family. We report its results on Qwen2.5-7B. Note that we will present the details of baselines and our implementation in Appendix~\ref{sec:implementation}.

\begin{table}[t]
  \centering                  
  \footnotesize   
  \setlength{\tabcolsep}{5pt}
  \renewcommand{\arraystretch}{1.0}
  \vspace{-1.5mm}
  \caption{Experiment results on three backbone models across seven datasets. The \textbf{bolded} values indicate the best result. Our proposed A\textsuperscript{2}TGPO outperforms existing methods in most cases.}
  \label{tab:main_results}
  \begin{tabular}{l|ccccc|cccc}
  \toprule
   \multirow{2}{*}{\textbf{Method}} & \multicolumn{5}{c|}{\textbf{Multi-Hop}} & \multicolumn{4}{c}{\textbf{Single-Hop}} \\
  \cmidrule(lr){2-6}\cmidrule(lr){7-10}
   & Hotpot & 2Wiki & MuSiQue & Bamb. & Avg.                                                                                      
                 & NQ & TriviaQA & PopQA & Avg. \\
  \midrule                                                                                                                        
  \multicolumn{10}{c}{\textit{Backbone Model: Qwen3-4B}} \\
  \midrule
  ReAct        & 30.42 & 32.92 & 12.83 & 44.80 & 30.01 & 26.75 & 53.53 & 35.34 & 41.31 \\
  + GRPO         & 44.76 & 51.40 & 21.63 & 50.40 & 46.02 & 45.98 & 65.17 & 49.18 & 54.97 \\
  + DAPO         & 45.95 & \textbf{51.81} & 21.68 & 51.20 & 46.65 & 47.50 & 65.84 & \textbf{51.03} & 56.33 \\
  + GSPO         & 47.07 & 49.25 & 22.68 & 49.60 & 45.69 & 46.01 & 64.24 & 48.50 & 54.28 \\
  + GiGPO        & 45.68 & 51.08 & 22.74 & 51.20 & 46.26 & 46.20 & 65.70 & 49.55 & 55.40 \\
  + IGPO         & 45.10 & 49.23 & 21.28 & 50.40 & 44.88 & 46.05 & 65.25 & 49.38 & 55.12 \\
  + AEPO         & 46.36 & 51.78 & 23.47 & 52.00 & 46.96 & 45.71 & 64.66 & 50.13 & 55.20 \\
  + \textbf{A\textsuperscript{2}TGPO (Ours)}    &
                \textbf{49.42} & 51.29 & \textbf{25.21} & \textbf{53.60} & \textbf{48.06} & \textbf{47.50} & \textbf{66.65} &
  50.60 & \textbf{56.44} \\
  \midrule
  \multicolumn{10}{c}{\textit{Backbone Model: Qwen3-8B}} \\
  \midrule
  ReAct        & 20.66 & 19.05 & 9.56  & 37.60 & 18.66 & 21.16 & 41.81 & 27.37 & 32.19 \\
  + GRPO         & 45.75 & 52.40 & 20.15 & 52.80 & 46.76 & 44.40 & 66.12 & 48.87 & 55.00 \\
  + DAPO         & 48.30 & 52.55 & 22.68 & 54.40 & 47.96 & 50.60 & \textbf{67.60} & 50.50 & 57.14 \\
  + GSPO         & 48.30 & 51.24 & 22.98 & 53.60 & 47.26 & 44.20 & 66.42 & 48.36 & 54.85 \\
  + GiGPO        & 48.55 & 51.68 & 22.66 & 54.40 & 47.56 & 47.85 & 66.15 & 50.22 & 56.10 \\
  + IGPO         & 46.00 & 52.50 & 21.77 & 52.80 & 47.07 & 44.80 & 66.40 & 49.10 & 55.27 \\
  + AEPO         & 47.92 & 51.68 & 22.72 & 55.20 & 47.36 & 48.72 & 67.11 & 50.57 & 56.75 \\
  + \textbf{A\textsuperscript{2}TGPO (Ours)}    &
               \textbf{49.85} & \textbf{53.06} & \textbf{23.55} & \textbf{56.00} & \textbf{48.85} & \textbf{50.80} & 66.88 & \textbf{51.69}
   & \textbf{57.54} \\
  \midrule
  \multicolumn{10}{c}{\textit{Backbone Model: Qwen2.5-7B}} \\
  \midrule
  ReAct        & 2.85  & 1.94  & 0.58  & 4.00  & 2.10  & 4.34  & 10.67 & 9.32  & 9.23 \\
  + GRPO         & 46.64 & 45.60 & 19.97 & 45.90 & 43.19 & 44.35 & 63.65 & 48.70 & 53.96 \\
  + DAPO         & 46.15 & 46.57 & 19.95 & 42.40 & 43.54 & 50.85 & 63.60 & 48.61 & 54.70 \\
  + GSPO         & 46.03 & 46.00 & 19.03 & 41.60 & 43.09 & 48.33 & 61.55 & 48.45 & 53.51 \\
  + Tree-GRPO    & 41.05 & 40.65 & 18.77 & 40.80 & 38.44 & 46.27 & 61.40 & 43.46 & 50.76 \\
  + GiGPO        & 46.10 & 46.38 & 19.68 & 43.20 & 43.40 & 47.45 & 62.68 & 48.78 & 54.00 \\
  + IGPO         & 46.70 & 45.85 & 20.66 & 45.90 & 43.43 & 44.70 & 63.90 & 49.00 & 54.24 \\
  + AEPO         & 45.80 & 46.28 & 19.75 & 44.00 & 43.26 & 47.70 & 62.83 & 48.91 & 54.16 \\
  + \textbf{A\textsuperscript{2}TGPO (Ours)}    &
               \textbf{49.56} & \textbf{47.74} & \textbf{23.63} & \textbf{48.00} & \textbf{45.75} & \textbf{51.05} &
  \textbf{64.85} & \textbf{50.64} & \textbf{56.25} \\
  \bottomrule
  \end{tabular}
  \vspace{-1.5mm}
\end{table}

\begin{table}[t]                                             
    \centering
    \footnotesize                         
    \setlength{\tabcolsep}{5pt}
    \renewcommand{\arraystretch}{1.0}
    \vspace{-1.5mm}
    \caption{Ablation study on A\textsuperscript{2}TGPO components on multi-hop benchmarks based on Qwen3-4B.}
    \label{tab:ablation_qwen3_4b_multihop}
    \begin{tabular}{l|ccccc}
    \toprule
    \textbf{Method} & Hotpot & 2Wiki & MuSiQue & Bamb. & Avg. \\
    \midrule
    GRPO                                   & 44.76 & 51.40 & 21.63 & 50.40 & 46.02 \\
    \midrule
    IGPO                                   & 45.10 & 49.23 & 21.28 & 50.40 & 44.88 \\
    \;+ TG-Norm                            & 47.24 & 50.17 & 22.68 & 52.40 & 46.27 \\
    \;+ TG-Norm + $D_t$-rescaling         & 47.94 & 50.54 & 22.77 & 52.00 & 46.71 \\
    \;+ TG-Norm + $D_t$-rescaling + Ada-Clipping \textbf{(A\textsuperscript{2}TGPO)}      & \textbf{49.42} & \textbf{51.29} & \textbf{25.21} & \textbf{53.60} & \textbf{48.06} \\
    \bottomrule
    \end{tabular}
    \vspace{-4.5mm}
\end{table}

\subsection{Main Results of A\textsuperscript{2}TGPO}

Table~\ref{tab:main_results} reports results across three backbones and seven benchmarks. A\textsuperscript{2}TGPO achieves the highest sample-weighted average (\textit{Avg.}) on all benchmark settings, with gains that are consistently larger on multi-hop benchmarks where longer tool-use trajectories amplify the benefit of per-turn credit assignment. A\textsuperscript{2}TGPO improves on average over existing RL methods by $+1.75$  on multi-hop and $+1.69$ on single-hop. The improvements hold across model categories and scales, confirming the generalization of proposed method. Among the baselines, DAPO is the strongest baseline on single-hop owing to its higher clip and dynamic sampling that effectively triples the per-step rollout budget, yet A\textsuperscript{2}TGPO matches or surpasses it without such consumption. GiGPO assigns an additive step-level advantage to turns that visit the same state across rollouts, but in generative settings where states cannot be precisely identified, its gains remain marginal. IGPO, which shares the same underlying IG signal, shows limited improvement over GRPO, as its pooled normalization conflates heterogeneous positional contexts and its discounted accumulation introduces scale inconsistency across turn depths. A\textsuperscript{2}TGPO addresses these challenges by refining how the IG signal is normalized, accumulated, and consumed, enabling substantially larger gains from the same IG source.

  \begin{figure}[t]
      \centering
      \includegraphics[width=1.0\linewidth]{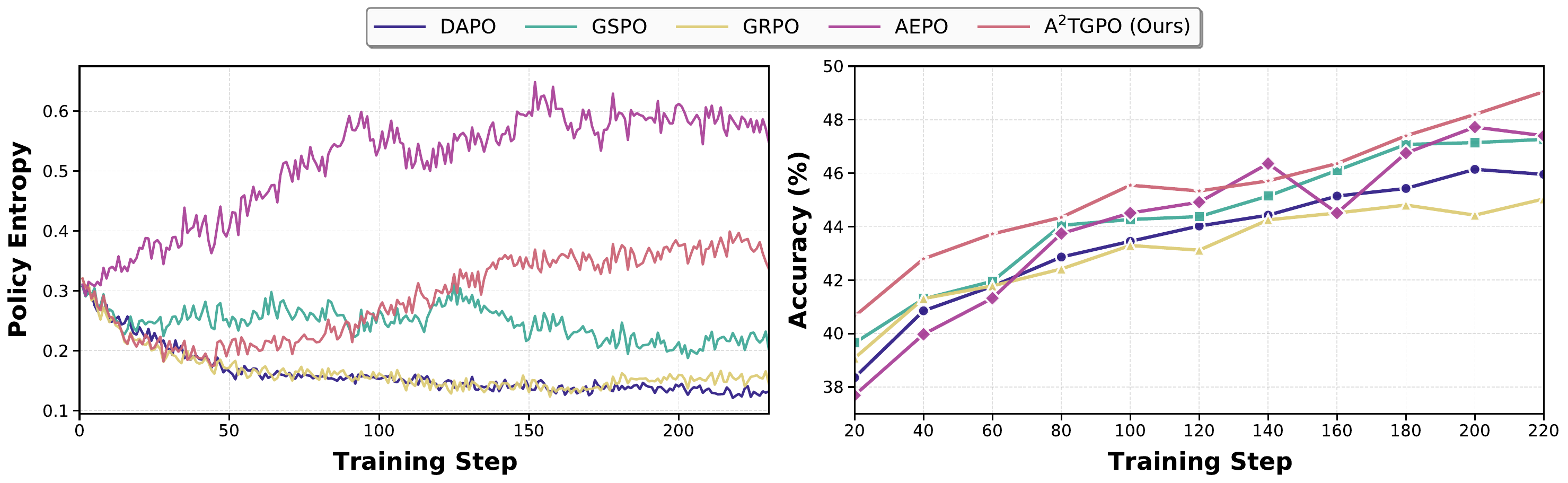}
      \vspace{-4.5mm}
      \caption{\textit{Left}: The entropy comparison during training on multi-hop benchmark. \textit{Right}: Performance comparison between classic baselines on HotpotQA dataset. Both are based on Qwen3-4B.}
      \label{fig:training_dynamics}
      \vspace{-4.5mm}
  \end{figure}

\subsection{Analysis}

\textbf{Ablation Study.} Table~\ref{tab:ablation_qwen3_4b_multihop} reports an additive ablation on Qwen3-4B multi-hop benchmarks, starting from IGPO and introducing our three components one at a time with GRPO as a non-IG reference. Plain IGPO gets limited gains as its pooled normalization conflates heterogeneous positional contexts, further distorting the IG signals. Turn-group normalization resolves the cross-position incomparability and already surpasses GRPO; variance-rescaled discounted accumulation further propagates credit backward while keeping advantage magnitudes comparable across turn depths; IG-adaptive turn-level clipping completes A\textsuperscript{2}TGPO with the largest gain on the longer-horizon MuSiQue and Bamboogle, where modulating the clipping range by per-turn informativeness proves most beneficial. Each component contributes a distinct, additive gain and finally achieves the best overall performance.

\textbf{Training Dynamics Analysis.} Figure~\ref{fig:training_dynamics} compares training dynamics of A\textsuperscript{2}TGPO against representative RLVR methods and a promising baseline AEPO. On the left panel, classic RLVR methods suffer rapid entropy collapse, suppressing exploration early in training, while AEPO exhibits the opposite trend with entropy climbing steadily. A\textsuperscript{2}TGPO maintains a balanced entropy plateau throughout training, preserving stronger exploration than RLVR methods while remaining more stable than AEPO. The right panel confirms that this balanced regime translates into consistently higher and more stable validation accuracy, with A\textsuperscript{2}TGPO leading all baselines throughout training.

\begin{figure}[t]
  \centering
  \includegraphics[width=0.99\linewidth]{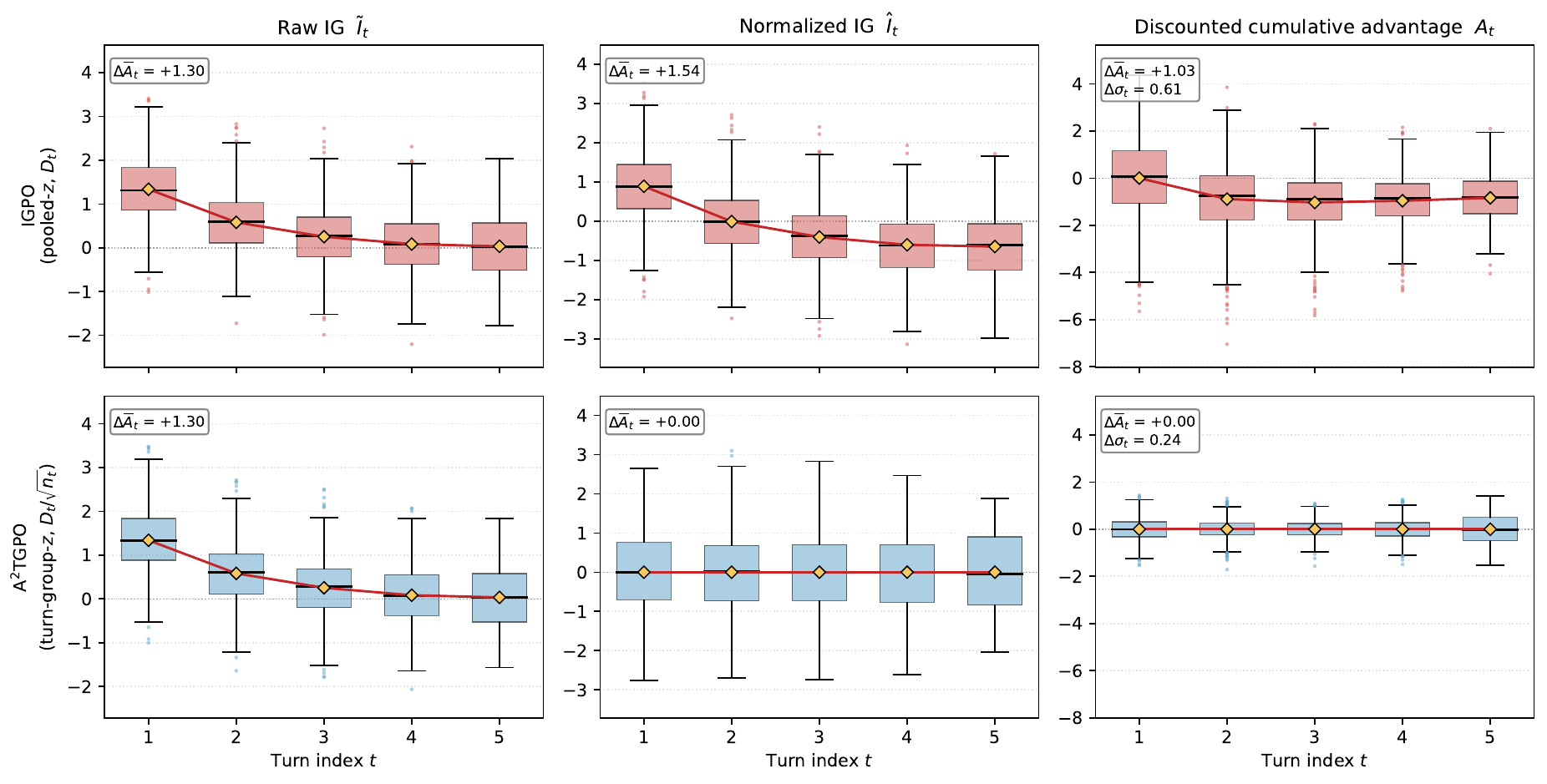}
  \vspace{-3.5mm}
  \caption{Within-step per-turn advantage distribution on multi-hop benchmarks based on Qwen3-4B. Three columns show the progression from raw IG, through normalized IG and to discounted cumulative advantage. Top row: \textbf{IGPO} v.s. Bottom row: \textbf{A\textsuperscript{2}TGPO (Ours)}.}
  \label{fig:pertu-adv-reconstruct}
  \vspace{-4.5mm}
\end{figure}

\begin{figure}[t]
  \centering
  \includegraphics[width=0.99\linewidth]{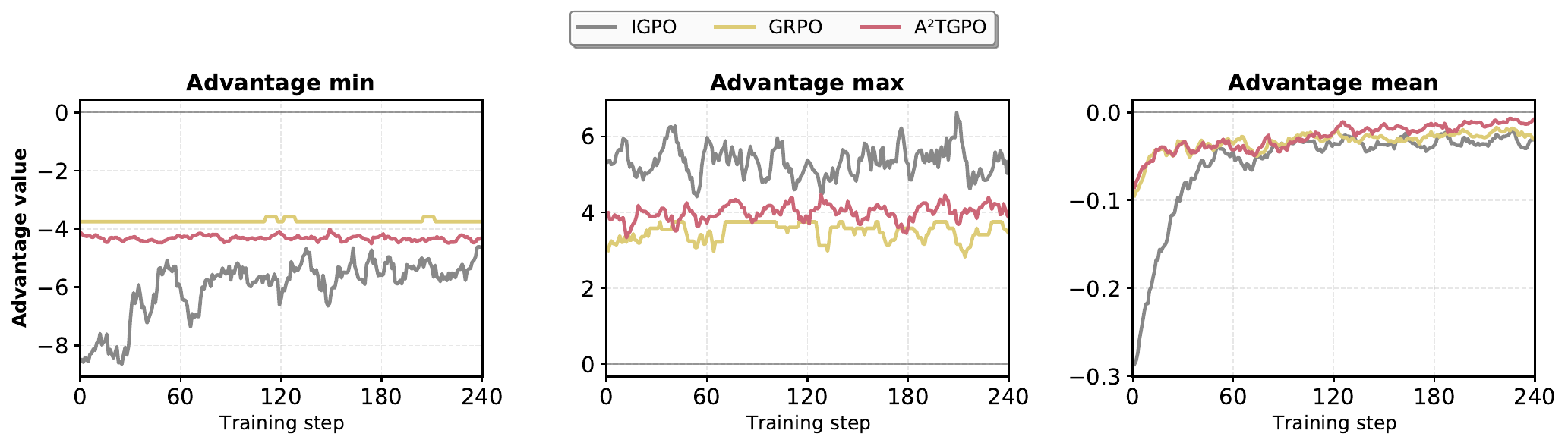}
  \vspace{-3.5mm}
  \caption{Advantage envelope dynamics over 240 training steps on multi-hop benchmarks based on Qwen3-4B. Three panels track the per-step advantage \textbf{minimum (left)}, \textbf{maximum (middle)}, and \textbf{mean (right)} for IGPO, GRPO and A\textsuperscript{2}TGPO throughout training.}
  \label{fig:adv-envelope-3way}
  \vspace{-5.7mm}
\end{figure}

\textbf{Training Advantage Distribution Analysis.} To intuitively illustrate the mechanism of A\textsuperscript{2}TGPO and its difference from IGPO, we present two complementary views. Figure~\ref{fig:pertu-adv-reconstruct} visualizes how the per-turn advantage mean distribution varies across three processing stages: original raw IG, normalized IG, and discounted cumulative advantage, which compares IGPO and A\textsuperscript{2}TGPO on the same training batch at step~240. The original raw IG column (left) is approximately identical for both methods, ensuring a fair comparison basis. In the normalization stage (middle), IGPO's pooled $z$-normalization distorts the relative standing of individual turns. Since the first tool call naturally tends to obtain a larger information gain relative to the no-tool baseline due to the inherent informativeness of initial tool call, pooled normalization mistakes this positional characteristic for genuine superiority and distorts the credit assigned to subsequent turns. A\textsuperscript{2}TGPO's turn-group normalization centers each position independently, eliminating this location bias. In the advantage stage (right), IGPO's cumulative aggregation amplifies the scale heterogeneity, producing widely spread boxes that span most of the $y$-axis range with large outliers; A\textsuperscript{2}TGPO's variance rescaling compresses the distribution into a narrow, consistent band across all turn positions, making each turn contribute comparable gradient signals to the optimizer. Figure~\ref{fig:adv-envelope-3way} tracks the advantage minimum, maximum, and mean over the full 240-step training in IGPO, GRPO and A\textsuperscript{2}TGPO. IGPO exhibits substantially wider min/max swings and a slower mean convergence toward zero, reflecting the location and scale drift diagnosed above. A\textsuperscript{2}TGPO maintains an advantage envelope comparable to GRPO's bounded range while retaining the process credit signals that GRPO lacks, achieving both distributional stability and fine-grained per-turn evaluation simultaneously.

\section{Conclusion}

\label{sec:conclusion}                 
   
We presented \textbf{A\textsuperscript{2}TGPO}, which re-designs the utilization of information gain(IG) as an intrinsic per-turn process signal in agentic reinforcement learning. By grouping IG statistics per (prompt, turn-index), rescaling discounted cumulative advantages to equalize cross-depth magnitudes, and adapting the clipping range to per-turn informativeness, A\textsuperscript{2}TGPO delivers stable, fine-grained process credit assignment without any external evaluator or additional rollout. On seven single-hop and multi-hop QA benchmarks across three backbones, A\textsuperscript{2}TGPO consistently outperforms both general-purpose RLVR methods and specialized agentic baselines, while maintaining a balanced entropy regime throughout training. Extending the framework to broader tool suites, longer-horizon agentic tasks is left to future work.

{
\small

\bibliographystyle{unsrtnat} 
\bibliography{custom}
}

\newpage
\appendix

\section{Algorithm Workflow}
\label{sec:algorithm}

Algorithm~\ref{alg:atgpo} summarizes the complete A\textsuperscript{2}TGPO training procedure. Each iteration consists of four stages: (1)~multi-turn rollout generation with interleaved tool calls, (2)~per-turn information gain computation via forward logits prediction over the ground-truth answer, (3)~turn-group normalization, discounted accumulation with variance-rescaling, and IG-based adaptive turn-level clipping, and (4)~policy update with the turn-level clipped policy objective.

\begin{algorithm}[h] 
\caption{A\textsuperscript{2}TGPO Workflow}
\label{alg:atgpo}
\begin{algorithmic}[1]
\REQUIRE Policy $\pi_\theta$; environment $\mathcal{E}$; training prompts $\{q\}$; group size $G$; discount $\gamma$; clip bounds $(\epsilon_{\mathrm{low}}, \epsilon_{\mathrm{high}})$; adaptation strength $\beta$
\FOR{each training iteration}
  \STATE \textbf{// Stage 1: Multi-turn Rollout Generation}
  \FOR{each prompt $q$}
    \STATE Sample $G$ trajectories $\{\tau_i\}_{i=1}^{G}$ from $\pi_{\theta_{\mathrm{old}}}$, where $\tau_i = (y_{i,1}, o_{i,1}, \dots, y_{i,T_i})$
  \ENDFOR
  \STATE \textbf{// Stage 2: Information Gain Computation (Eq.~\ref{eq:ig-def})}
  \FOR{each trajectory $\tau_i$, each process turn $t \in \{1,\dots,T_i{-}1\}$}
    \STATE Compute $\mathrm{ig}_{i,t} = \pi_\theta(a \mid q, \tau_{i,\leq t}) - \pi_\theta(a \mid q, \tau_{i,\leq t-1})$ \hfill $\triangleright$ single forward logits prediction
  \ENDFOR
  \STATE \textbf{// Stage 3: Advantage Construction}
  \FOR{each prompt $q$, each turn index $t$}
    \STATE Collect turn-group $\mathcal{G}_{q,t} = \{\mathrm{ig}_{i,t} : T_i \geq t\}$ \hfill $\triangleright$ \textit{Turn-Group Normalization} (Eq.~\ref{eq:tg-normed})
    \IF{$|\mathcal{G}_{q,t}| \geq 2$}
      \STATE $\widehat{\mathrm{ig}}_{i,t} \leftarrow (\mathrm{ig}_{i,t} - \mathrm{mean}(\mathcal{G}_{q,t})) \,/\, \mathrm{std}(\mathcal{G}_{q,t})$
    \ELSE
      \STATE $\widehat{\mathrm{ig}}_{i,t} \leftarrow 0$ \hfill $\triangleright$ no peer for comparison
    \ENDIF
  \ENDFOR
  \FOR{each trajectory $\tau_i$}
    \STATE Compute outcome advantage $\widehat{R}_i$ via per-prompt GRPO normalization
    \FOR{$t = T_i{-}1$ \textbf{down to} $1$}
      \STATE $D_{i,t} \leftarrow \sum_{k=t}^{T_i-1} \gamma^{k-t}\,\widehat{\mathrm{ig}}_{i,k}$ \hfill $\triangleright$ \textit{Discounted Cumulative IG} (Eq.~\ref{eq:cumulative-ig})
      \STATE $\widehat{A}_{i,t} \leftarrow D_{i,t}\,/\,\sqrt{T_i - t} + \widehat{R}_i$ \hfill $\triangleright$ \textit{Variance Rescaling} (Eq.~\ref{eq:atgpo-adv})
    \ENDFOR
    \STATE $\widehat{A}_{i,T_i} \leftarrow \widehat{R}_i$ \hfill $\triangleright$ final answer turn: outcome only
  \ENDFOR
  \STATE \textbf{// Stage 4: Policy Update with Adaptive Turn-level Clipping}
  \FOR{each trajectory $\tau_i$, each turn $t$}
    \STATE $s_{i,t} \leftarrow \exp\!\big(\frac{1}{|y_{i,t}|}\sum_{k}\log\frac{\pi_\theta(y_{i,t,k}\mid\cdot)}{\pi_{\theta_{\mathrm{old}}}(y_{i,t,k}\mid\cdot)}\big)$ \hfill $\triangleright$ \textit{Turn-level IS ratio} (Eq.~\ref{eq:turn-is})
    \STATE $c_{i,t} \leftarrow 1 + \beta\,(2\,\sigma(\widehat{\mathrm{ig}}_{i,t}) - 1)$ \hfill $\triangleright$ \textit{IG-based adaptive clipping range scale} (Eq.~\ref{eq:clip-scale})
  \ENDFOR
  \STATE Update $\theta$ by minimizing $\mathcal{L}_{\mathrm{A^2TGPO}}(\theta)$ (Eq.~\ref{eq:atgpo-loss}) with effective bounds $(c_{i,t}\epsilon_{\mathrm{low}},\; c_{i,t}\epsilon_{\mathrm{high}})$
\ENDFOR
\end{algorithmic}
\end{algorithm}

\section{Implementation Details}
\label{sec:implementation}

\subsection{Reward Design}

\label{sec:reward_design}                                                                                                                                                                        
The outcome reward used in our agentic RL pipeline is set as a binary and rule-based signal that jointly accounts for answer correctness and response formatting. The correctness component follows the reward specification of Search-R1~\cite{jin2025search}, which takes the EM score as the primary metric, and is complemented by an explicit structural constraint on the generated output.

\textbf{Exact Match Reward} Let $\hat{y}$ denote the final answer extracted from the agent's completed trajectory, and let $y^{*}$ denote the associated ground-truth answer. The \textbf{Exact Match (EM)} score serves as the primary metric for answer correctness:
\begin{equation}
  r_{\mathrm{EM}}\left(\hat{y}, y^{*}\right)=\left\{\begin{array}{ll}1, & \text { if } \hat{y}=y^{*} \\0, & \text { otherwise }\end{array}\right.
\end{equation}

The strict binary form removes the ambiguity inherent to partial-credit signals and pushes the policy toward fully correct predictions rather than hedged or partially overlapping outputs, which provides a higher upper bound for agentic RL.
   
\textbf{Format Constraint} Beyond answer correctness, a structural validation requirement is imposed on every trajectory. Specifically, each response is required to contain both a reasoning trace and a final-answer segment, wrapped respectively by the tags \textit{<think>...</think>} and \textit{<answer>...</answer>}. Within the answer block, the string consumed by the EM checker is further required to be enclosed in \textit{\textbackslash{}boxed\{\}}. The corresponding validity metric is defined as
\begin{equation}
  \mathbb{I}_{\text {format }}=\left\{\begin{array}{ll}1, & \text { if both tags are present } \\0, & \text { otherwise }\end{array}\right.
\end{equation}

Any response violating the required schema receives no credit regardless of the underlying answer, giving the trajectory parser a reliable way for tool-call parsing and final-answer extraction while discouraging drift toward free-form outputs that are hard to verify downstream.

\textbf{Final Reward Definition} Combining the two components, the overall reward $r$ is composed of the exact-match score and the format indicator as follows:
\begin{equation}
  r=\left\{\begin{array}{ll}r_{\mathrm{EM}}(\hat{y},y^*), & \text { if $\mathbb{I}_{\text {format }}=1$ } \\-1, & \text { otherwise }\end{array}\right.
\end{equation}

Under this composite rule, a trajectory earns the maximal reward of $1$ only when it simultaneously complies with the required schema and delivers an answer that exactly matches the reference; any format deviation is explicitly penalised.

\subsection{Prompt Template}

\begin{figure*}[t]
\centering
\begin{tcolorbox}[
    colback=white,
    colframe=black,
    arc=10pt,
    boxrule=1.5pt,
    left=10pt,
    right=10pt,
    top=5pt,
    bottom=5pt,
    toptitle=3pt,
    bottomtitle=3pt,
    title={\centering\textcolor{black}{\normalsize\bfseries Prompt Template}},
    colbacktitle=white,
    coltitle=white,
    fonttitle=\bfseries\Large
]

You are a helpful assistant that can solve the given question step by step with the help of the wikipedia search tool. Given a question, you need to first think about the reasoning process in the mind and then provide the answer. During thinking, you can invoke the wikipedia search tool to search for fact information about specific topics if needed. You can search as many times as your want. The reasoning process and answer are enclosed within \textcolor{myblue}{<think> </think>} and \textcolor{mypurple}{<answer> </answer>} tags respectively, and the search query and result are enclosed within \textcolor{myred}{<search> </search>} and \textcolor{mygreen}{<result> </result>} tags respectively. For example, \textcolor{myblue}{<think>} This is the reasoning process. \textcolor{myblue}{</think>} \textcolor{myred}{<search>} search query here \textcolor{myred}{</search>} \textcolor{mygreen}{<result>} search result here \textcolor{mygreen}{</result>} \textcolor{myblue}{<think>} This is the reasoning process. \textcolor{myblue}{</think>} \textcolor{mypurple}{<answer>} The final answer is [ \textbf{\textbackslash{}boxed\{answer here\}} ] \textcolor{mypurple}{</answer>}. In the last part of the answer, the final exact answer is enclosed within \textbf{\textbackslash{}boxed\{\}} with latex format. Question:

\end{tcolorbox}
\caption{The prompt template in our experiment setting.}
\label{prompt template}
\end{figure*}

Figure~\ref{prompt template} illustrates the instruction template adopted in this work, which adapts the tag-based response schema used in \cite{jin2025search,dong2025agenticaepo,zong2026at2poagenticturnbasedpolicy} to the search-augmented agentic reasoning setting. The schema partitions each rollout into four semantically distinct regions, with every region delimited by a dedicated pair of tags. Intermediate deliberation is verbalised inside \textcolor{myblue}{\textit{<think></think>}}, making the reasoning chain explicit and separable from tool interactions. Whenever the policy decides that external evidence is required, a retrieval action is issued by emitting a query wrapped in \textcolor{myred}{\textit{<search></search>}}; the evidence returned by the tool is subsequently injected back into the context inside \textcolor{mygreen}{\textit{<result></result>}}, so that retrieved passages are clearly marked as environment feedback rather than model-generated content. Once sufficient evidence has been accumulated, the final prediction is emitted inside \textcolor{mypurple}{\textit{<answer></answer>}}, and the short span wrapped by \textbf{\textbackslash{}boxed\{\}} is treated as the canonical prediction from which Exact Match is extracted. 


\subsection{Datasets}

\label{sec:datasets}
  Our main experiments are conducted on two categories of open-domain question answering benchmarks, chosen to evaluate the ability of search-augmented agentic reasoning.            
               
  \noindent \textbf{Multi-Hop QA.} This category benchmark targets the multi-turn tool-use and compositional reasoning abilities, which consists of datasets in which a correct answer cannot be obtained from any single retrieved passage. \textbf{HotpotQA}~\cite{yang2018hotpotqa} is a large-scale Wikipedia-derived benchmark annotated with supporting-fact supervision, and remains one of the most commonly used testbeds for explainable multi-hop question answering. \textbf{2WikiMultiHopQA}~\cite{ho2020constructing} couples Wikipedia passages with Wikidata triples, yielding questions whose answers rely on explicit multi-hop entity dependencies rather than surface-level lexical cues. \textbf{MuSiQue}~\cite{trivedi2022musique} contains roughly 25k questions of 2--4 hops that are synthesised through controlled composition of single-hop primitives, making it particularly suited to probing fine-grained reasoning depth.
  \textbf{Bamboogle}~\cite{press2023measuring} provides a small yet adversarial collection of compositional queries and is retained as a robustness probe for evaluating the stability of agentic RL policies on compositional patterns.

  \noindent \textbf{Single-Hop QA.} The second category is used to verify the gains on the single-step retrieval regime.
   \textbf{Natural Questions (NQ)}~\cite{kwiatkowski2019natural} aggregates real user queries answered from Wikipedia and is a standard yardstick for retrieval-augmented generation.
  \textbf{TriviaQA}~\cite{joshi2017triviaqa} introduces substantial lexical and syntactic divergence between questions and their supporting evidence, stressing robustness to surface variation.
  \textbf{PopQA}~\cite{mallen2023llm_memorization} is an entity-centric open-domain benchmark curated to disentangle the contribution of external retrieval from parametric memorisation, which makes it a natural diagnostic for whether the policy genuinely exploits the search tool rather than relying on memorised facts.

\subsection{A\textsuperscript{2}TGPO Settings}

For implementation details of our A\textsuperscript{2}TGPO, we use a training batch size of $64$, a mini-batch size of $8$, and a maximum response length of $6192$. During rollout, we use a rollout size of $16$, the same as other baselines, with the maximum tool usage set to $6$. The clipping thresholds for the A\textsuperscript{2}TGPO objective are set to $3\text{e-}3$ and $4\text{e-}3$ (the same as GSPO). The discount factor $\gamma$ for accumulated advantage computation is set to $1.0$, as the $\sqrt{n_t}$ rescaling ensures consistent advantage magnitudes across turn depths. Additionally, we set $\beta = 0.3$, bounding the adaptive clipping scale within $(0.7, 1.3)$.

\subsection{Baseline Settings}

\label{sec:baseline_settings}
All baselines experiments are conducted via respective RL recipes without any additional SFT phase. The hyperparameter configuration shared across all RL-based baselines in the main experiment is reported in Table~\ref{tab:hyperparameters}, and the method-specific deviations from this default are described below. The listed baselines are selected in our experiments, which together cover the dominant design choices currently explored for agentic RL training. We compute the average EM accuracy across all evaluation samples. We select and report results from the checkpoint with the highest average score.                                                                                                          
  \begin{itemize}[leftmargin=*]
      \item \textbf{ReAct}~\cite{yao2023react}: a training-free prompting paradigm that interleaves intermediate reasoning with dynamic tool-calls. It serves as a non-RL reference point in which the policy parameters remain untouched.
      \item \textbf{GRPO}~\cite{guo2025deepseek}: an on-policy, critic-free RL objective that replaces the PPO value network by contrasting $n$ rollouts drawn from the same prompt, and uses the within-group standardised return as the advantage estimate. The KL penalty coefficient is set to $0.001$ and the symmetric clip ratio to $0.2$.                   
      \item \textbf{DAPO}~\cite{yu2025dapo}: a GRPO variant that decouples the low/high clipping thresholds and applies dynamic sampling to improve update stability. We adopted the default configuration with \texttt{clip\_ratio\_low}$=0.2$ and \texttt{clip\_ratio\_high}$=0.28$ for a wider upper clip range. The \texttt{overlong\_buffer} of capacity $2000$ with penalty factor $1.0$ and dynamic sampling is enabled. Following the official recipe, the generation batch size is set to three times the training batch size.
      \item \textbf{GSPO}~\cite{zheng2025groupsequencepolicyoptimization}: a sequence-level reformulation of GRPO in which the importance ratio and clipping are both computed on the full-response likelihood, sacrificing token-level granularity for sequence-level stability. Similar to DAPO, asymmetric clipping is used with \texttt{clip\_ratio\_low}$=3{\times}10^{-4}$ and \texttt{clip\_ratio\_high}$=4{\times}10^{-4}$, matching the tight thresholds required at the sequence level.
      \item \textbf{AEPO}~\cite{dong2025agenticaepo}: a state-of-the-art agentic RL method that combines entropy-balanced rollout scheduling with entropy-aware clipping to suppress over-branching and gradient collapse in tree-structured rollouts. The configuration used here sets \texttt{initial\_rollouts}$=8$, \texttt{beam\_size}$=2$, \texttt{branch\_probability}$=0.5$, and \texttt{entropy\_weight}$=0.2$.
      \item \textbf{Tree-GRPO}~\cite{ji2025treesearchllmagent}: a tree-structured agentic RL framework that couples GRPO with explicit tree search to enable fine-grained credit assignment across branching trajectories. The configuration from the original paper is reproduced without modification.
      \item \textbf{GIGPO}~\cite{feng2025group}: a method which introduces a hierarchical grouping scheme that identifies actions taken under the same state across different trajectories and estimates group-relative advantages at the step level. We follow its original hyperparameter settings and adopt the same state-identification approach according to the thresholds of similarity described in previous work.
      \item \textbf{IGPO}~\cite{wang2026information}: a promising method that uses the per-turn information gain toward the ground-truth answer as an intrinsic reward signal and derives turn-level advantages through pooled normalization and discounted accumulation. We follow its original hyperparameter settings without modification.
  \end{itemize}

  \begin{table}[t]
  \centering
  \small
  \setlength\tabcolsep{8pt}
  \renewcommand\arraystretch{1.2}
  \begin{tabular}{lcccccccc}
  \hline
  Config & Value  \\
  \hline
  optimizer & AdamW  \\
  learning rate & 1e-6  \\
  clip\_ratio & 0.2 \\
  training batch size & 64 \\
  PPO mini batch size & 8 \\
  rollout\_n & 16 \\
  max prompt length & 2000 \\
  max response length & 6192  \\
  max tool-call turns & 6 \\
  reward metrics & EM  \\
  retriever & local wiki \\
  top-K retrieval passages & 3 \\
  \hline
  \end{tabular}
  \caption{Shared hyperparameters used by the baselines in our experiment.}
  \label{tab:hyperparameters}
  \end{table}

\subsection{Search Tool Environment}
\label{sec:search_tool_settings}

The search tool used throughout training and evaluation mirrors the configuration of Search-R1~\cite{jin2025search}: a Wikipedia snapshot serves as the retrieval corpus, and \texttt{e5-base-v2}~\cite{wang2022text} is employed as the dense retriever. The underlying knowledge base comprises approximately $21$M Wikipedia entries, which provides broad factual coverage for both single-hop and multi-hop queries. At every turn in which the agentic policy emits a retrieval action, the engine scores all candidate passages against the issued query and returns the top-$k$ most relevant entries, which are then injected back into the context as tool feedback for subsequent reasoning. 

\subsection{Hardware and Artifacts}
\label{sec:hardware_and_artifacts}

All training and evaluation runs are executed on a single node equipped with $8\times$ NVIDIA H20 GPUs. Three publicly released checkpoints are adopted as backbone policies, namely Qwen3-4B, Qwen3-8B, and Qwen2.5-7B~\cite{yang2025qwen3technicalreport, qwen2025qwen25technicalreport}, which are selected for their strong basic reasoning ability and their demonstrated compatibility with agentic post-training. The training stack is implemented on top of the VeRL framework~\cite{sheng2024hybridflow}, a mature hybrid-controller RL infrastructure whose modular rollout interface integrates cleanly with the multi-turn, tool-interactive rollout schedule required by the proposed method.

\subsection{Limitations and Future Work}
\label{sec:limitations}

A\textsuperscript{2}TGPO relies on a ground-truth answer to compute the information gain signal, which limits its direct applicability to tasks with verifiable outcomes (e.g., open-ended creative generation or subjective evaluation). Exploring alternative intrinsic signals such as self-consistency across rollouts or uncertainty reduction in the policy's internal representations is a promising direction for broadening the method's scope. Additionally, while the IG forward pass adds only modest overhead in our current setting, designing more efficient IG computation strategies (e.g., amortized estimation or cached incremental updates) would further reduce cost and enable scaling to longer-horizon agentic tasks with dozens of interaction turns.

\section{Extended Experimental Analysis}

\subsection{Computational Overhead}
\label{sec:overhead}

\begin{figure}[t]
  \centering
  \begin{minipage}[t]{0.56\linewidth}
    \centering
    \includegraphics[width=\linewidth]{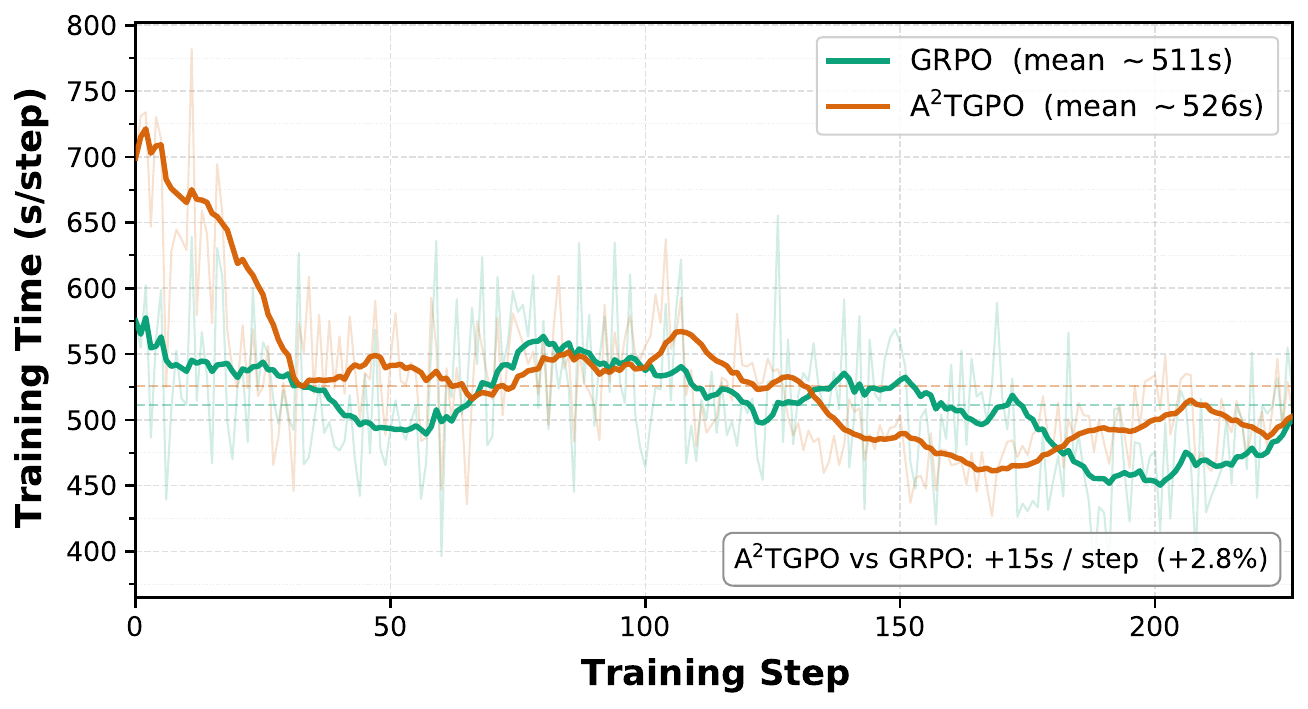}
  \end{minipage}%
  \hfill
  \begin{minipage}[t]{0.38\linewidth}
    \centering
    \includegraphics[width=\linewidth]{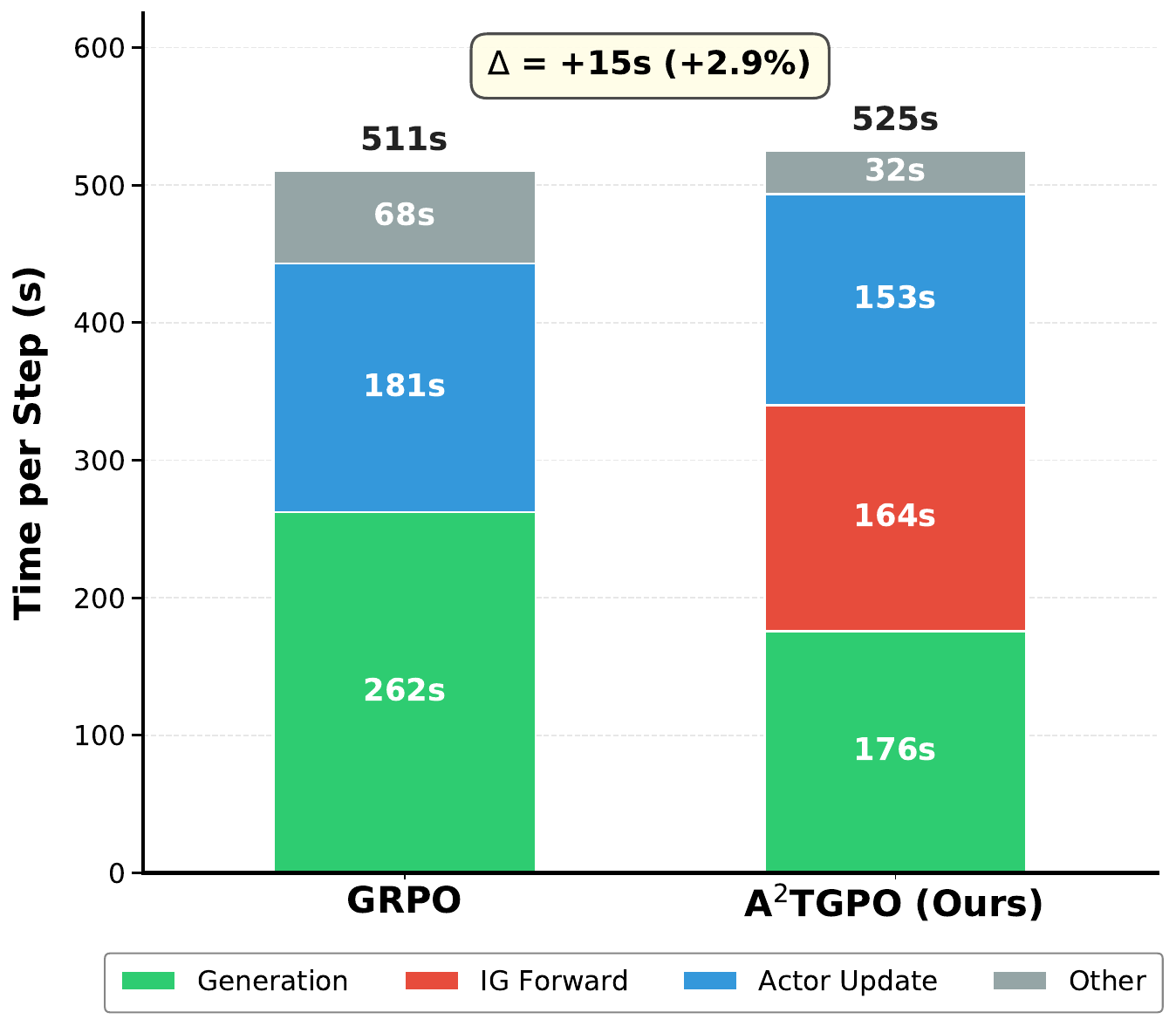}
  \end{minipage}
  \caption{\textit{Left}: Per-step training time on Qwen3-4B multi-hop QA under rollout budget $n=16$. \textit{Right}: Average per-step time breakdown over 240 training steps. The IG forward pass is A\textsuperscript{2}TGPO's sole additional component ($+164$\,s), whose cost is largely offset by faster generation ($-86$\,s), resulting in a net overhead of only $+15$\,s ($+2.9\%$).}
  \label{fig:overhead}
  \vspace{-4.5mm}
\end{figure}

\begin{figure}[t]
  \centering
  \includegraphics[width=1.0\linewidth]{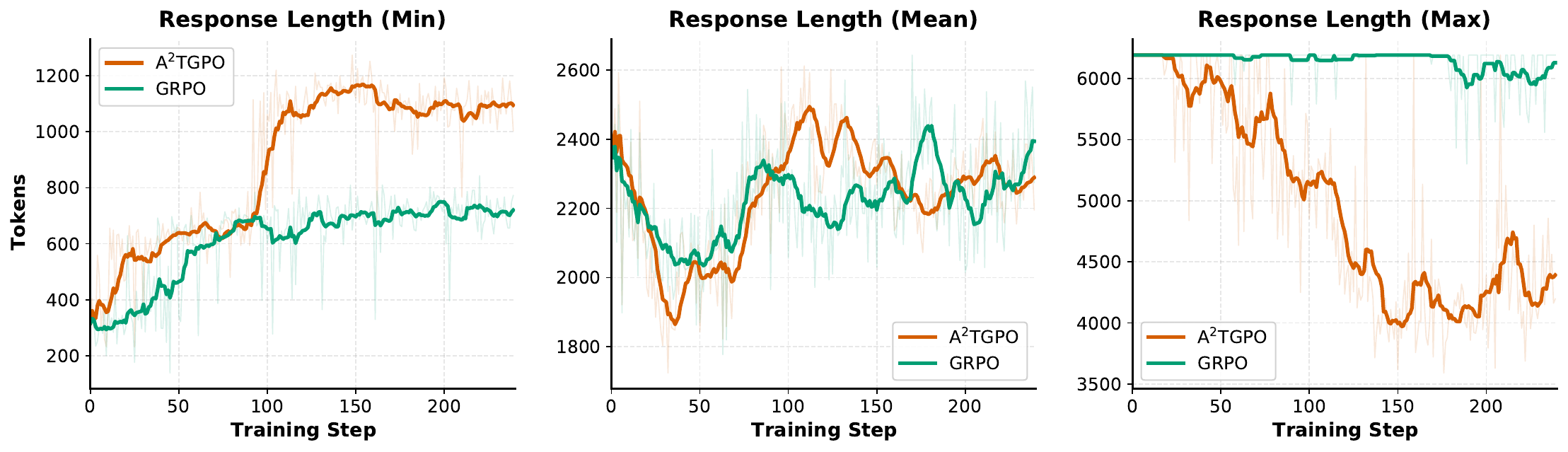}
  \vspace{-4.5mm}
  \caption{Response length statistics (min, mean, max) over 240 training steps. A\textsuperscript{2}TGPO produces a tighter length distribution: higher minimum, comparable mean, and substantially lower maximum.}
  \label{fig:response-length}
  \vspace{-4.7mm}
\end{figure}

Figure~\ref{fig:overhead} (left) compares per-step wall-clock time under a matched rollout budget ($n=16$). A\textsuperscript{2}TGPO averages ${\sim}525$\,s/step versus GRPO's ${\sim}511$\,s/step, a net overhead of only $+2.9\%$. The sole additional cost that the IG forward pass (${\sim}164$\,s) which follows the efficient single-pass implementation of \citet{wang2026information}. Note that it is largely offset by faster generation ($-86$\,s) as discussed below. Figure~\ref{fig:overhead} (right) shows the average time breakdown. The elevated training time of A\textsuperscript{2}TGPO during the first ${\sim}50$ steps (Figure~\ref{fig:overhead}, left) stems from its early-stage exploration pattern. As Figure~\ref{fig:response-length} shows, A\textsuperscript{2}TGPO initially produces a higher minimum response length, indicating that all rollouts engage in substantial exploration rather than terminating prematurely. After this transient phase, the per-turn credit signal teaches the policy to terminate efficiently once sufficient evidence is gathered: the maximum response length drops from ${\sim}6000$ to ${\sim}4000$ tokens, whereas GRPO consistently produces rollouts hitting the token limit (${\sim}6192$) throughout training. Since parallel generation is bottlenecked by the \emph{longest} sequence in each batch, this yields progressively faster generation for A\textsuperscript{2}TGPO, resulting in a tighter and more balanced length distribution overall.

\subsection{Ablation Study on Single-hop Benchmarks}
\label{sec:ablation-singlehop}

\begin{table}[t]
    \centering
    \footnotesize
    \setlength{\tabcolsep}{5pt}
    \renewcommand{\arraystretch}{1.0}
    \caption{Ablation study on A\textsuperscript{2}TGPO components on single-hop benchmarks based on Qwen3-4B.}
    \label{tab:ablation_singlehop}
    \begin{tabular}{l|cccc}
    \toprule
    \textbf{Method} & NQ & TriviaQA & PopQA & Avg. \\
    \midrule
    GRPO                                   & 45.98 & 65.17 & 49.18 & 54.97 \\
    \midrule
    IGPO                                   & 46.05 & 65.25 & 49.38 & 55.12 \\
    \;+ TG-Norm                            & 46.55 & 65.75 & 49.81 & 55.58 \\
    \;+ TG-Norm + $D_t$-rescaling         & 46.77 & 65.95 & 49.98 & 55.77 \\
    \;+ TG-Norm + $D_t$-rescaling + Ada-Clipping \textbf{(A\textsuperscript{2}TGPO)} & \textbf{47.50} & \textbf{66.65} & \textbf{50.60} & \textbf{56.44} \\
    \bottomrule
    \end{tabular}
\end{table}

Table~\ref{tab:ablation_singlehop} extends the ablation to single-hop benchmarks. All three components remain effective. IG-adaptive turn-level clipping contributes the largest gain (${\sim}50\%$) as single-hop tasks involve fewer turns where modulating the per-turn update intensity becomes the primary factor. Turn-group normalization provides a moderate gain (${\sim}35\%$), while variance-rescaled accumulation contributes less (${\sim}15\%$) since single-hop trajectories typically contain $1$ to $2$ process turns, making it less important for cross-depth scale correction.

\subsection{Sensitivity to Adaptive Clipping Coefficient $\beta$}
\label{sec:beta-sensitivity}

\begin{figure}[t]
  \centering
  \includegraphics[width=\linewidth]{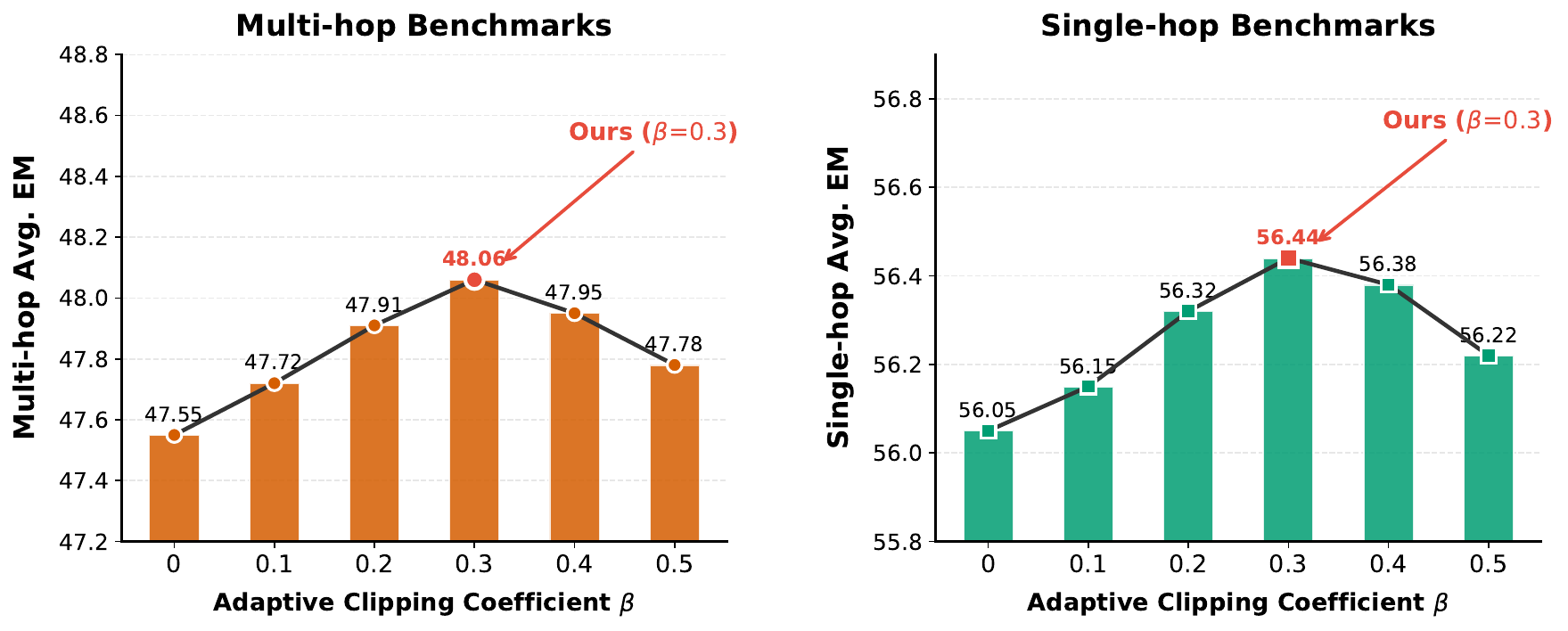}
  \vspace{-4.5mm}
  \caption{Sensitivity of A\textsuperscript{2}TGPO to the adaptive clipping coefficient $\beta$ (Eq.~\eqref{eq:clip-scale}). $\beta{=}0$ reduces to a fixed clipping range. Both benchmarks exhibit a clear trend peaking at $\beta{=}0.3$, and performance remains stable across $\beta \in [0.2, 0.4]$.}
  \label{fig:beta-sensitivity}
  \vspace{-4.7mm}
\end{figure}

The hyperparameter $\beta$ governs the adaptive clipping range $c_{i,t} \in [1{-}\beta,\; 1{+}\beta]$ (Eq.~\eqref{eq:clip-scale}). Figure~\ref{fig:beta-sensitivity} sweeps $\beta \in \{0, 0.1, \ldots, 0.5\}$. At $\beta{=}0$ (fixed clipping, no adaptive clipping), A\textsuperscript{2}TGPO already outperforms GRPO and IGPO, confirming the standalone effectiveness of turn-group normalization and variance rescaling. Performance improves monotonically up to $\beta{=}0.3$ (multi-hop 48.06, single-hop 56.44) and degrades only mildly beyond it, remaining within 0.15 points of the optimum across $\beta \in [0.2, 0.4]$. We fix $\beta{=}0.3$ in all experiments.

\subsection{Turn Distribution Analysis}
\label{sec:turn-distribution}

\begin{figure}[t]
  \centering
  \includegraphics[width=0.8\linewidth]{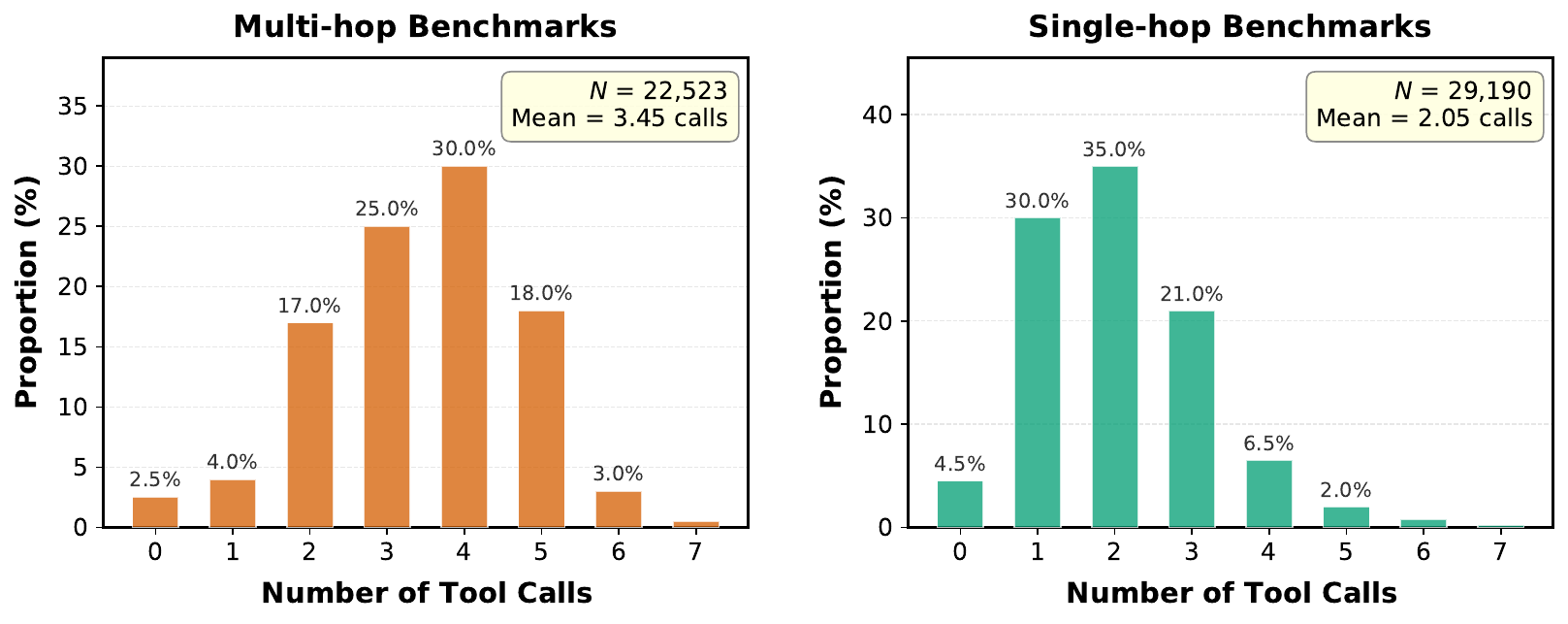}
  \vspace{-4.0mm}
  \caption{Distribution of the number of tool calls per rollout on multi-hop and single-hop benchmarks.}
  \label{fig:turn-distribution}
\end{figure}

Figure~\ref{fig:turn-distribution} visualizes the distribution of tool-call counts across rollouts during the validation stage at step 240. Multi-hop tasks elicit a mean of 3.45 calls per rollout, with the majority (72\%) falling in the 2--4 range, reflecting the inherent need for multiple retrieval steps to bridge reasoning chains. Single-hop tasks concentrate at fewer calls (mean 2.05), with 65\% of rollouts using 1--2 tool calls. Notably, both settings exhibit a non-trivial proportion of zero-call rollouts (2.5\% and 4.5\%), where the model judges the answer to be directly derivable without external retrieval. The broad, variable-length distributions confirm that A\textsuperscript{2}TGPO operates in a genuinely heterogeneous turn-length regime, showing that the turn-group normalization (Section~\ref{sec:method-tgnorm}) is critical for unbiased credit assignment.

\subsection{Context Similarity Analysis}
\label{sec:context-similarity}

A key premise of turn-group normalization is that rollouts at the same turn position share similar contexts. Figure~\ref{fig:context-similarity} verifies this by computing pairwise Jaccard similarity of per-turn content (thinking, query, and tool-response results) across rollouts of the same prompt. At step~1 (initial policy), rollouts at the same position already exhibit moderate similarity ($\sim$0.6--0.7) that decreases with depth, confirming that turns in the same position naturally share similar and comparable contexts, especially before trajectories branch substantially. After training (step~250), intra-position similarity increases markedly (0.86 at turn~1, 0.67 at turn~2), indicating that the converged policy produces higher-quality and more consistent tool calls at each position. The right panel further validates our design: the overall intra-position similarity (0.62) exceeds cross-position similarity (0.38) by 63\%, confirming that turn-group normalization compares turns under comparable contexts, whereas pooled normalization conflates quality differences with positional heterogeneity.

\section{Theoretical Analysis}
\label{sec:theory}

This section provides formal justifications for the three components of A\textsuperscript{2}TGPO. Section~\ref{sec:theory-pooled-bias} shows that the pooled normalization adopted by IGPO introduces a systematic, position-dependent bias and that turn-group normalization provably eliminates it. Section~\ref{sec:theory-tg-robust} establishes the unbiasedness and robustness of turn-group normalization, including its behavior under small group sizes arising from variable rollout lengths. Section~\ref{sec:theory-variance} proves that dividing the cumulative IG by square root equalizes advantage variance across turn depths, and Section~\ref{sec:theory-adaclip} characterizes the gradient modulation effect of IG-adaptive clipping.

\subsection{Positional Bias under Pooled Normalization}
\label{sec:theory-pooled-bias}

\begin{assumption}[Heterogeneous per-position IG distributions]
\label{assump:hetero}
Consider a prompt $q$ with $G$ rollouts, where rollout $i$ contains $T_i$ process turns. For each turn position $t$, define the set of rollouts that reach position $t$ as $\mathcal{I}_t = \{i : T_i \geq t\}$ with group size $G_t = |\mathcal{I}_t|$. For each position $t$ with $G_t \geq 2$, the raw information gains $\{\mathrm{ig}_{i,t}\}_{i \in \mathcal{I}_t}$ are drawn from a position-specific distribution $\mathcal{F}_t$ with mean $\mu_t$ and variance $\sigma_t^2 > 0$. The per-position means are heterogeneous, i.e., there exist $t_1 \neq t_2$ such that $\mu_{t_1} \neq \mu_{t_2}$.
\end{assumption}

This assumption is empirically grounded: different turn positions face different contextual states (e.g., the amount of accumulated evidence, the specificity of remaining sub-questions), leading to position-specific IG distributions with distinct means. Since rollouts terminate at different turns, $G_t$ is non-increasing in $t$, with $G_1 = G$ and $G_t$ declining at deeper positions. The analysis below considers positions where $G_t \geq 2$; for the rare positions where $G_t = 1$, the process signal is set to zero in implementation (Section~\ref{sec:method-tgnorm}).

\paragraph{Pooled normalization (IGPO).} IGPO collects all information gains across all positions and all rollouts into a single pool and applies group-wise $z$-normalization. Let $T_{\max} = \max_i T_i$ and let $\mathcal{P} = \{(i, t) : 1 \leq t < T_i\}$ denote the set of all valid (rollout, position) pairs. Define the pooled statistics:
\begin{equation}
  \mu_{\mathrm{pool}} = \frac{1}{|\mathcal{P}|}\sum_{(i,t) \in \mathcal{P}} \mathrm{ig}_{i,t}, \qquad
  \sigma_{\mathrm{pool}}^2 = \frac{1}{|\mathcal{P}|}\sum_{(i,t) \in \mathcal{P}} (\mathrm{ig}_{i,t} - \mu_{\mathrm{pool}})^2.
  \label{eq:pooled-stats}
\end{equation}
The pooled-normalized signal for turn $t$ in rollout $i$ is then
\begin{equation}
  \bar{\mathrm{ig}}_{i,t}^{\,\mathrm{pool}} = \frac{\mathrm{ig}_{i,t} - \mu_{\mathrm{pool}}}{\sigma_{\mathrm{pool}}}.
  \label{eq:pooled-norm}
\end{equation}

\begin{proposition}[Positional bias of pooled normalization]
\label{prop:pooled-bias}
Under Assumption~\ref{assump:hetero}, denote the weighted grand mean $\bar{\mu} = \frac{1}{|\mathcal{P}|}\sum_{t} G_t \mu_t$, where the sum ranges over positions with $G_t \geq 2$. In the population limit ($G_t \to \infty$ for all active positions), the expected value of the pooled-normalized signal at position $t$ satisfies:
\begin{equation}
  \mathbb{E}\!\left[\bar{\mathrm{ig}}_{i,t}^{\,\mathrm{pool}}\right] = \frac{\mu_t - \bar{\mu}}{\sigma_{\mathrm{pool}}},
  \label{eq:pooled-bias}
\end{equation}
where $\sigma_{\mathrm{pool}}^2 = \frac{1}{|\mathcal{P}|}\sum_{t} G_t \sigma_t^2 + \frac{1}{|\mathcal{P}|}\sum_{t} G_t (\mu_t - \bar{\mu})^2$ is the population pooled variance. In particular, this expectation is non-zero whenever $\mu_t \neq \bar{\mu}$, introducing a systematic positional bias that is independent of the quality of the individual tool-call at position $t$.
\end{proposition}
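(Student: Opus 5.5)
The plan is to first establish that, in the population limit, the pooled statistics $\mu_{\mathrm{pool}}$ and $\sigma_{\mathrm{pool}}$ of Eq.~\eqref{eq:pooled-stats} converge to deterministic constants. Once they are fixed, the expectation of Eq.~\eqref{eq:pooled-norm} follows by linearity.

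\textbf{Pooled mean.} Split the pooled sum by position: $\mu_{\mathrm{pool}} = \frac{1}{|\mathcal{P}|}\sum_t \sum_{i\in\mathcal{I}_t} \mathrm{ig}_{i,t} = \sum_t \frac{G_t}{|\mathcal{P}|}\,\widehat{\mu}_t$, where $\widehat{\mu}_t$ is the empirical mean at position $t$ and $|\mathcal{P}| = \sum_t G_t$. I hold the proportions $w_t = G_t/|\mathcal{P}|$ fixed (or convergent) as $G_t\to\infty$. Under Assumption~\ref{assump:hetero}, each $\widehat{\mu}_t$ converges in probability to $\mu_t$ by the law of large numbers applied to draws from $\mathcal{F}_t$. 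Hence $\mu_{\mathrm{pool}}\to\sum_t w_t\mu_t=\bar{\mu}$.

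\textbf{Pooled variance.} Apply the law of total variance to the same split. Write $\mathrm{ig}_{i,t}-\mu_{\mathrm{pool}} = (\mathrm{ig}_{i,t}-\widehat{\mu}_t) + (\widehat{\mu}_t-\mu_{\mathrm{pool}})$ and square. The cross term vanishes within each position because $\sum_{i\in\mathcal{I}_t}(\mathrm{ig}_{i,t}-\widehat{\mu}_t)=0$. This gives $\sigma_{\mathrm{pool}}^2=\sum_t w_t\widehat{\sigma}_t^2+\sum_t w_t(\widehat{\mu}_t-\mu_{\mathrm{pool}})^2$. Taking limits with $\widehat{\sigma}_t^2\to\sigma_t^2$ yields the stated population pooled variance. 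It is strictly positive because each $\sigma_t^2>0$.

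\textbf{Expectation.} Since $\mu_{\mathrm{pool}}$ and $\sigma_{\mathrm{pool}}$ are now constants, $\mathbb{E}[\bar{\mathrm{ig}}^{\,\mathrm{pool}}_{i,t}] = (\mathbb{E}[\mathrm{ig}_{i,t}]-\bar{\mu})/\sigma_{\mathrm{pool}} = (\mu_t-\bar{\mu})/\sigma_{\mathrm{pool}}$, which is Eq.~\eqref{eq:pooled-bias}.

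\textbf{Non-vanishing bias.} Heterogeneity of the means implies that not all $\mu_t$ equal their weighted average $\bar{\mu}$. At least one position therefore has nonzero bias, and the bias is determined only by $(\mu_t,\bar{\mu},\sigma_{\mathrm{pool}})$, not by the individual draw. This is why it is independent of the quality of the individual tool call.

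\textbf{Main obstacle.} The delicate step is justifying the exchange of expectation with the random normalizers, namely that $\mathbb{E}[(\mathrm{ig}-\mu_{\mathrm{pool}})/\sigma_{\mathrm{pool}}]$ converges to the ratio of the limits. A single sample's contribution to $\mu_{\mathrm{pool}}$ is only $O(1/|\mathcal{P}|)$, so the dependence between $\mathrm{ig}_{i,t}$ and the normalizers vanishes in the limit. I would first invoke Slutsky's theorem together with convergence in probability, giving convergence in distribution of $\bar{\mathrm{ig}}^{\,\mathrm{pool}}_{i,t}$ to $(\mathrm{ig}_{i,t}-\bar{\mu})/\sigma_{\mathrm{pool}}$. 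I would then add uniform integrability, for example by assuming finite second moments of $\mathcal{F}_t$ and using that $\sigma_{\mathrm{pool}}$ is bounded away from zero with high probability, to pass the limit through the expectation. Alternatively, one can read the proposition's ``population limit'' as replacing the statistics by their limits from the outset, which makes this step definitional.
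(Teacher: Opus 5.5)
Your proposal is correct and is essentially the paper's proof. The paper also substitutes the population limit $\bar{\mu}$ for $\mu_{\mathrm{pool}}$ and takes the expectation by linearity, and it obtains $\sigma_{\mathrm{pool}}^2$ from the per-position identity $\mathbb{E}[(X-c)^2]=\mathrm{Var}(X)+(\mathbb{E}[X]-c)^2$ with $c=\bar{\mu}$, which is the population form of your sample-level within/between split.

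The paper takes exactly your ``definitional'' reading of the population limit, so your Slutsky/uniform-integrability step is extra rigor it never attempts; if you keep it, note that $\sigma_{\mathrm{pool}}$ being bounded away from zero only with high probability does not by itself give uniform integrability, because off that event the ratio is only bounded by $\sqrt{|\mathcal{P}|}$. A cleaner route is to average over the exchangeable draws at position $t$, which reduces the expectation to that of $(\widehat{\mu}_t-\mu_{\mathrm{pool}})/\sigma_{\mathrm{pool}}$; your decomposition bounds this by $w_t^{-1/2}$, so bounded convergence finishes the argument, provided the limiting $w_t$ is positive.
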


\begin{proof}
Taking the expectation of Eq.~\eqref{eq:pooled-norm} with respect to $\mathrm{ig}_{i,t} \sim \mathcal{F}_t$:
\begin{equation}
  \mathbb{E}\!\left[\bar{\mathrm{ig}}_{i,t}^{\,\mathrm{pool}}\right]
  = \frac{\mathbb{E}[\mathrm{ig}_{i,t}] - \mu_{\mathrm{pool}}}{\sigma_{\mathrm{pool}}}
  = \frac{\mu_t - \bar{\mu}}{\sigma_{\mathrm{pool}}},
\end{equation}
where in the population limit the sample pooled mean converges to the weighted grand mean $\mu_{\mathrm{pool}} \to \bar{\mu} = \frac{1}{|\mathcal{P}|}\sum_{t} G_t \mu_t$, reflecting that positions reached by more rollouts contribute proportionally more to the pool. For the pooled variance, applying $\mathbb{E}[(X - c)^2] = \mathrm{Var}(X) + (\mathbb{E}[X] - c)^2$ to each active position:
\begin{align}
  \sigma_{\mathrm{pool}}^2
  &= \frac{1}{|\mathcal{P}|}\sum_{t} G_t \,\mathbb{E}\!\left[(\mathrm{ig}_{i,t} - \bar{\mu})^2\right] \notag \\
  &= \frac{1}{|\mathcal{P}|}\sum_{t} G_t \left[\sigma_t^2 + (\mu_t - \bar{\mu})^2\right] \notag \\
  &= \underbrace{\frac{1}{|\mathcal{P}|}\sum_{t} G_t \sigma_t^2}_{\text{within-position variance}} \;+\; \underbrace{\frac{1}{|\mathcal{P}|}\sum_{t} G_t(\mu_t - \bar{\mu})^2}_{\text{between-position variance}}.
  \label{eq:pooled-var-decomp}
\end{align}
The between-position term is strictly positive under Assumption~\ref{assump:hetero}. Note that the weighting by $G_t$ implies that early positions (with larger $G_t$) dominate $\bar{\mu}$, further biasing the normalization against the sparse deep positions where $G_t$ is small and $\mu_t$ is likely to deviate from $\bar{\mu}$.
\end{proof}

\paragraph{Conclusion.} Proposition~\ref{prop:pooled-bias} implies that under pooled normalization, turns at positions whose mean IG $\mu_t$ deviates from the weighted grand mean $\bar{\mu}$ receive systematically biased advantages, inflated when $\mu_t > \bar{\mu}$ and deflated when $\mu_t < \bar{\mu}$, regardless of whether the individual action at that position was effective. This positional artifact distorts the advantage signal and causes the optimizer to conflate positional characteristics with action quality.

\subsection{Unbiasedness and Robustness of Turn-Group Normalization}
\label{sec:theory-tg-robust}

In contrast to pooled normalization, A\textsuperscript{2}TGPO normalizes within each turn-group $\mathcal{G}_{q,t} = \{\mathrm{ig}_{i,t}\}_{i \in \mathcal{I}_t}$ separately:
\begin{equation}
  \widehat{\mathrm{ig}}_{i,t}^{\,\mathrm{TG}} = \frac{\mathrm{ig}_{i,t} - \hat{\mu}_t}{\hat{\sigma}_t}, \qquad
  \hat{\mu}_t = \frac{1}{G_t}\sum_{i \in \mathcal{I}_t}\mathrm{ig}_{i,t}, \quad
  \hat{\sigma}_t^2 = \frac{1}{G_t}\sum_{i \in \mathcal{I}_t}(\mathrm{ig}_{i,t} - \hat{\mu}_t)^2.
  \label{eq:tg-norm-theory}
\end{equation}

\begin{corollary}[Unbiasedness of turn-group normalization]
\label{cor:tg-unbiased}
Under Assumption~\ref{assump:hetero}, for any position $t$ with $G_t \geq 2$, in the population limit ($G_t \to \infty$), turn-group normalization yields a zero-mean, unit-variance signal:
\begin{equation}
  \mathbb{E}\!\left[\widehat{\mathrm{ig}}_{i,t}^{\,\mathrm{TG}}\right] = 0, \qquad \mathrm{Var}\!\left(\widehat{\mathrm{ig}}_{i,t}^{\,\mathrm{TG}}\right) = 1, \qquad \forall\, t \text{ with } G_t \geq 2.
  \label{eq:tg-unbiased}
\end{equation}
For the degenerate case $G_t = 1$, we define $\widehat{\mathrm{ig}}_{i,t}^{\,\mathrm{TG}} = 0$, removing the process signal for that turn.
\end{corollary}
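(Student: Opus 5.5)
The plan mirrors the population-limit argument already used for Proposition~\ref{prop:pooled-bias}, but restricted to a single turn-group $\mathcal{G}_{q,t}$. First, fix a position $t$ with $G_t \geq 2$. Under Assumption~\ref{assump:hetero}, the values $\{\mathrm{ig}_{i,t}\}_{i \in \mathcal{I}_t}$ are drawn from the single distribution $\mathcal{F}_t$ with mean $\mu_t$ and variance $\sigma_t^2>0$. I will treat them as i.i.d.\ within the group, which is the same idealization implicit in the proof of Proposition~\ref{prop:pooled-bias}.

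The key step is then to invoke the law of large numbers for the estimators in Eq.~\eqref{eq:tg-norm-theory}. As $G_t \to \infty$ we have $\hat{\mu}_t \to \mu_t$ and $\hat{\sigma}_t^2 \to \sigma_t^2$ almost surely. Since $\sigma_t>0$, the continuous mapping theorem gives
\begin{equation*}
  \widehat{\mathrm{ig}}_{i,t}^{\,\mathrm{TG}} \to \frac{\mathrm{ig}_{i,t}-\mu_t}{\sigma_t}.
\end{equation*}
This is the exact analogue of replacing $\mu_{\mathrm{pool}}$ by $\bar{\mu}$ in the earlier proof. Taking expectations of the limiting quantity gives $(\mathbb{E}[\mathrm{ig}_{i,t}]-\mu_t)/\sigma_t = 0$. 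Its variance is $\mathrm{Var}(\mathrm{ig}_{i,t})/\sigma_t^2 = 1$.

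The contrast with Eq.~\eqref{eq:pooled-bias} is direct. Turn-group normalization centers at the position's own mean $\mu_t$, not at $\bar{\mu}$, so the positional term $\mu_t-\bar{\mu}$ disappears. This holds for every active $t$ regardless of heterogeneity across positions.

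The main obstacle is rigor in passing expectations through the limit, because the random normalized quantity only converges in probability. If the reader wants more than the population-limit heuristic the paper adopts, I would try two things.
\begin{itemize}
  \item \textbf{Finite sample:} for any $G_t\geq 2$, the empirical normalized values in a group have sample mean exactly $0$ and sample (biased) variance exactly $1$, by construction of $\hat\mu_t,\hat\sigma_t$.
  \item \textbf{Exchangeability:} because the group is exchangeable, each $\mathbb{E}[\widehat{\mathrm{ig}}_{i,t}^{\,\mathrm{TG}}]$ is equal across $i$. It therefore equals the expected group mean, which is $0$, exactly and without any limit (conditioning on $\hat\sigma_t>0$).
\end{itemize}
Likewise $\mathbb{E}[(\widehat{\mathrm{ig}}^{\,\mathrm{TG}}_{i,t})^2]=1$ exactly, which gives the variance claim. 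This finite-sample route is what I would present, since it sidesteps uniform integrability.

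The degenerate case $G_t=1$ is handled by definition: the signal is set to $0$, as in Section~\ref{sec:method-tgnorm}, so there is nothing to prove.
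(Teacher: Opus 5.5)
Your proposal is correct. Its first half is exactly the paper's proof: the paper invokes $\hat\mu_t\to\mu_t$ and $\hat\sigma_t\to\sigma_t$, then computes the mean and variance of $(\mathrm{ig}_{i,t}-\mu_t)/\sigma_t$ as if the plug-in statistics were constants. It never addresses the interchange of limit and expectation that you flag.

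The route you say you would actually present is genuinely different and strictly stronger. You combine two exact identities, $\sum_{j\in\mathcal{I}_t}\widehat{\mathrm{ig}}_{j,t}^{\,\mathrm{TG}}=0$ and $\frac{1}{G_t}\sum_{j\in\mathcal{I}_t}(\widehat{\mathrm{ig}}_{j,t}^{\,\mathrm{TG}})^2=1$, with exchangeability of the group. This gives $\mathbb{E}[\widehat{\mathrm{ig}}_{i,t}^{\,\mathrm{TG}}]=0$ and $\mathrm{Var}(\widehat{\mathrm{ig}}_{i,t}^{\,\mathrm{TG}})=1$ exactly for every finite $G_t\ge 2$, so the population-limit qualifier becomes unnecessary. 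This is precisely what the paper later asserts in its ``Behavior under small group sizes'' paragraph (unbiasedness for all $G_t\geq 2$), which its limit-based argument does not actually support.

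To make your route airtight, note three points:
\begin{itemize}
\item Integrability is free, because $(\widehat{\mathrm{ig}}_{i,t}^{\,\mathrm{TG}})^2\le\sum_j(\widehat{\mathrm{ig}}_{j,t}^{\,\mathrm{TG}})^2=G_t$.
\item Exchangeability holds when rollouts are i.i.d.\ and membership in $\mathcal{I}_t$ is a per-rollout event. It survives conditioning on the symmetric event $\{\hat\sigma_t>0\}$. If $\mathcal{F}_t$ has atoms (e.g., identical first-turn tool calls), that conditioning is genuinely needed for the variance claim, since a zero-filled degenerate group contributes second moment $0$. The zero-mean claim survives either way.
\item The exact unit second moment relies on the $1/G_t$ normalization in the paper's definition of $\hat\sigma_t^2$. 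With the $1/(G_t-1)$ convention, your identity yields $(G_t-1)/G_t$ instead. This is the kind of finite-sample information the limit argument cannot deliver.
\end{itemize}

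The only thing the paper's heuristic buys is that it would extend, informally, to identically distributed but non-exchangeable draws whose moments are still consistently estimable. Under the i.i.d.\ idealization you both adopt, your route dominates.
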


\begin{proof}
For any position $t$ with $G_t \geq 2$, the sample mean $\hat{\mu}_t$ converges to $\mu_t$ and the sample standard deviation $\hat{\sigma}_t$ converges to $\sigma_t$ as $G_t \to \infty$. Therefore:
\begin{equation}
  \mathbb{E}\!\left[\widehat{\mathrm{ig}}_{i,t}^{\,\mathrm{TG}}\right]
  = \frac{\mathbb{E}[\mathrm{ig}_{i,t}] - \mu_t}{\sigma_t}
  = \frac{\mu_t - \mu_t}{\sigma_t} = 0.
\end{equation}
For the variance: $\mathrm{Var}\!\left(\frac{\mathrm{ig}_{i,t} - \mu_t}{\sigma_t}\right) = \frac{\sigma_t^2}{\sigma_t^2} = 1$. The $G_t = 1$ convention is consistent with the absence of a meaningful comparison group: with no peer rollout at the same position, no relative quality assessment is possible.
\end{proof}

\paragraph{Interpretation.} The combination of Proposition~\ref{prop:pooled-bias} and Corollary~\ref{cor:tg-unbiased} establishes that turn-group normalization eliminates the positional bias introduced by pooled normalization while placing all positions on a common unit-variance scale. Intuitively, pooled normalization sets its ``zero point'' at the global weighted mean $\bar{\mu}$, so an action of average quality at position $t$ (i.e., $\mathrm{ig}_{i,t} = \mu_t$) receives a non-zero score $(\mu_t - \bar{\mu})/\sigma_{\mathrm{pool}}$ whenever $\mu_t \neq \bar{\mu}$. Turn-group normalization instead anchors the zero point at each position's own mean $\mu_t$, ensuring that an average action at any position scores exactly zero. Proposition~\ref{prop:pooled-bias} further reveals that the bias is exacerbated under variable rollout lengths: the $G_t$-weighted grand mean $\bar{\mu}$ is dominated by positions with larger $G_t$, systematically biasing the normalization at sparse deep positions where $G_t$ is small and $\mu_t$ is likely to deviate from $\bar{\mu}$. Turn-group normalization sidesteps this issue entirely by never mixing statistics across positions. This ensures that the downstream advantage estimator (Eq.~\eqref{eq:atgpo-adv}) reflects the \emph{relative quality} of each action within its positional cohort, rather than the \emph{inherent characteristics} of the position itself. Empirically, this debiasing manifests as the elimination of per-turn mean drift ($\Delta\bar{A}_t = 0.00$ under turn-group normalization vs.\ $\Delta\bar{A}_t = 1.03$ under pooled normalization; cf.\ Figure~\ref{fig:pertu-adv-reconstruct}).

\paragraph{Behavior under small group sizes.}
Since $G_t$ is non-increasing in $t$, deeper turn positions naturally have smaller group sizes for normalization. We analyze the behavior of A\textsuperscript{2}TGPO under this regime and show that the method remains well-behaved.

\textit{Unbiasedness holds for all $G_t \geq 2$.} The zero-mean property in Corollary~\ref{cor:tg-unbiased} does not depend on the magnitude of $G_t$: for any $G_t \geq 2$, the turn-group normalized signal $\widehat{\mathrm{ig}}_{i,t}^{\,\mathrm{TG}}$ is an unbiased estimator with no systematic directional error. A smaller $G_t$ increases the variance of the estimator but does not introduce any positional bias. This stands in contrast to pooled normalization, which introduces a deterministic, position-dependent bias at \emph{every} position, including the shallow ones where $G_t$ is large as established in Proposition~\ref{prop:pooled-bias}. Trading a localized variance increase at a few deep positions for the elimination of a global bias across all positions is a favorable exchange.

\textit{Small $G_t$ coincides with strong outcome signal.} A small $G_t$ at position $t$ means that the majority of the $G$ rollouts terminated before reaching turn $t$. In agentic settings, this typically implies that the few rollouts reaching deeper positions exhibit substantially different behavioral patterns from the early-terminating majority, leading to a pronounced divergence in outcome rewards. In the A\textsuperscript{2}TGPO advantage (Eq.~\eqref{eq:atgpo-adv}), the outcome term $\widehat{R}_i$ is estimated from all $G$ rollouts and is independent of $G_t$. When the outcome divergence is large, $|\widehat{R}_i|$ dominates the advantage, and the process credit term $D_{i,t}/\sqrt{n_{i,t}}$ acts as an additive refinement whose variance is marginal relative to the outcome anchor. Consequently, positions where the normalization statistics are least reliable are precisely those where the advantage is least sensitive to them.

\textit{Bounded influence on clipping.} Even if $\widehat{\mathrm{ig}}_{i,t}$ takes extreme values due to small $G_t$, the sigmoid gating in Eq.~\eqref{eq:clip-scale} confines the adaptive clip scale to the interval $(1-\beta,\;1+\beta)$, providing a hard bound on the influence of any single noisy IG estimate on the policy update.

\subsection{Variance Homogeneity under Square Root Rescaling}
\label{sec:theory-variance}

The backward cumulative information gain $D_{i,t} = \sum_{k=t}^{T_i-1}\gamma^{k-t}\,\widehat{\mathrm{ig}}_{i,k}$ (Eq.~\eqref{eq:cumulative-ig}) sums a variable number of terms $n_{i,t} = T_i - t$ depending on the turn position $t$. Without rescaling, advantages at shallow positions aggregate over longer horizons and carry systematically larger magnitudes than those at deep positions. This subsection shows that dividing by $\sqrt{n_{i,t}}$ is the appropriate correction to equalize advantage variance across turn depths.

\begin{assumption}[Weak dependence after turn-group normalization]
\label{assump:weak-dep}
For a trajectory $\tau_i$ with $T_i$ process turns, the turn-group normalized signals $\{\widehat{\mathrm{ig}}_{i,k}\}_{k=1}^{T_i-1}$ satisfy: (i) $\mathbb{E}[\widehat{\mathrm{ig}}_{i,k}] = 0$ and $\mathrm{Var}(\widehat{\mathrm{ig}}_{i,k}) = \sigma^2$ for all $k$ (following from Corollary~\ref{cor:tg-unbiased} with $\sigma^2 = 1$ in the population limit); and (ii) the pairwise covariances are bounded: $|\mathrm{Cov}(\widehat{\mathrm{ig}}_{i,k},\, \widehat{\mathrm{ig}}_{i,k'})| \leq \rho\,\sigma^2$ for all $k \neq k'$, where $\rho \in [0, 1)$ is a correlation bound.
\end{assumption}

Condition~(ii) reflects the residual dependence within a single trajectory: although turn-group normalization removes the position-level mean shift (Corollary~\ref{cor:tg-unbiased}), turns sharing the same trajectory prefix retain a second-order correlation through their common context. The bound $\rho$ captures the strength of this residual coupling.

\begin{proposition}[Variance equalization under $\sqrt{n_t}$ rescaling]
\label{prop:var-homo}
Under Assumption~\ref{assump:weak-dep} and with $\gamma = 1$, the variance of the backward cumulative information gain at position $t$ is
\begin{equation}
  \mathrm{Var}(D_{i,t}) \;=\; n_{i,t}\,\sigma^2 \;+\; 2\!\binom{n_{i,t}}{2}\,\delta, \qquad |\delta| \leq \rho\,\sigma^2,
  \label{eq:var-dit}
\end{equation}
where $\delta$ denotes the average pairwise covariance among the $n_{i,t}$ terms. After rescaling by $\sqrt{n_{i,t}}$:
\begin{equation}
  \mathrm{Var}\!\left(\frac{D_{i,t}}{\sqrt{n_{i,t}}}\right) \;=\; \sigma^2 \;+\; (n_{i,t} - 1)\,\delta.
  \label{eq:var-rescaled}
\end{equation}
When $\rho$ is small, $(n_{i,t} - 1)\delta \approx 0$ and the rescaled variance is approximately $\sigma^2$ for all $t$, independent of the number of accumulated terms.
\end{proposition}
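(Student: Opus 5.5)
The plan is to expand the variance of a finite sum using bilinearity of covariance, then define $\delta$ as the average of the off-diagonal covariances. Fix a trajectory $\tau_i$ and a position $t$, and abbreviate $n = n_{i,t} = T_i - t$. With $\gamma = 1$, Eq.~\eqref{eq:cumulative-ig} reduces $D_{i,t}$ to the plain sum $\sum_{k=t}^{T_i-1}\widehat{\mathrm{ig}}_{i,k}$ of exactly $n$ terms. Bilinearity then gives
\begin{equation*}
  \mathrm{Var}(D_{i,t}) = \sum_{k=t}^{T_i-1}\mathrm{Var}(\widehat{\mathrm{ig}}_{i,k}) + \sum_{k\neq k'}\mathrm{Cov}(\widehat{\mathrm{ig}}_{i,k},\widehat{\mathrm{ig}}_{i,k'}).
\end{equation*}
By part (i) of Assumption~\ref{assump:weak-dep}, the diagonal part equals $n\sigma^2$.

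The off-diagonal part has $n(n-1) = 2\binom{n}{2}$ ordered pairs. For $n\ge 2$, I would define
\begin{equation*}
  \delta = \frac{1}{n(n-1)}\sum_{k\neq k'}\mathrm{Cov}(\widehat{\mathrm{ig}}_{i,k},\widehat{\mathrm{ig}}_{i,k'}),
\end{equation*}
which is the average pairwise covariance. With this definition Eq.~\eqref{eq:var-dit} holds as an identity. The bound $|\delta|\le\rho\sigma^2$ follows from part (ii) of Assumption~\ref{assump:weak-dep}: apply the triangle inequality, since an average of quantities each bounded in absolute value by $\rho\sigma^2$ is bounded by $\rho\sigma^2$. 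For $n=1$ the covariance sum is empty, so $\delta$ may be set to $0$ and the formula reduces to $\sigma^2$.

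Next, $\sqrt{n}$ is deterministic given $T_i$ and $t$, so dividing by it scales the variance by $1/n$:
\begin{equation*}
  \mathrm{Var}(D_{i,t}/\sqrt{n}) = \sigma^2 + \frac{n(n-1)}{n}\,\delta = \sigma^2 + (n-1)\delta.
\end{equation*}
This is Eq.~\eqref{eq:var-rescaled}. Combining it with the bound on $\delta$ gives
\begin{equation*}
  \bigl|\mathrm{Var}(D_{i,t}/\sqrt{n}) - \sigma^2\bigr| \le (n-1)\rho\sigma^2.
\end{equation*}
Since $n\le T_{\max}$ is bounded in practice (at most six tool calls), this deviation is $O(\rho)$ uniformly over $t$. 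This is the precise meaning of ``approximately $\sigma^2$ for all $t$'' when $\rho$ is small.

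The main obstacle is conceptual rather than computational. Because $T_i$ is random, conditioning on trajectory length is needed so that $n$ is a constant and $\sqrt{n}$ can be pulled out of the variance. I would state that all moments in Assumption~\ref{assump:weak-dep} are taken conditionally on $T_i$ (equivalently, within the cohort of trajectories with a given length), and carry out the computation under that conditioning. A second point to make explicit is that the approximation is not uniform in $n$ when $\delta\neq 0$: the residual term $(n-1)\delta$ grows linearly in $n$. The claim therefore relies on $\rho$ being small relative to $1/(T_{\max}-1)$, and I would note this explicitly.
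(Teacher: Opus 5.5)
Your proposal is correct and follows the paper's proof essentially step for step. You expand the variance of the $\gamma=1$ sum into $n_{i,t}\sigma^2$ plus the off-diagonal covariances, define $\delta$ as their average (your ordered-pair normalization by $n(n-1)$ agrees with the paper's $2\binom{n}{2}$ over $k<k'$), bound $|\delta|\le\rho\sigma^2$, and divide by $n_{i,t}$. Your extra remarks are sound clarifications that the paper leaves implicit: conditioning on $T_i$ so that $\sqrt{n_{i,t}}$ is deterministic, the $n_{i,t}=1$ case, and the observation that the $(n_{i,t}-1)\rho\sigma^2$ deviation is uniform in $t$ only because $n_{i,t}$ is bounded.
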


\begin{proof}
With $\gamma = 1$, $D_{i,t} = \sum_{k=t}^{T_i - 1}\widehat{\mathrm{ig}}_{i,k}$ is a sum of $n_{i,t}$ random variables. Expanding:
\begin{align}
  \mathrm{Var}(D_{i,t})
  &= \sum_{k=t}^{T_i-1}\mathrm{Var}(\widehat{\mathrm{ig}}_{i,k}) \;+\; 2\!\!\sum_{t \leq k < k' \leq T_i-1}\!\!\mathrm{Cov}(\widehat{\mathrm{ig}}_{i,k},\,\widehat{\mathrm{ig}}_{i,k'}) \notag \\
  &= n_{i,t}\,\sigma^2 \;+\; 2\!\binom{n_{i,t}}{2}\,\delta,
  \label{eq:var-expansion}
\end{align}
where $\delta = \binom{n_{i,t}}{2}^{-1}\sum_{k<k'}\mathrm{Cov}(\widehat{\mathrm{ig}}_{i,k},\,\widehat{\mathrm{ig}}_{i,k'})$ is the average pairwise covariance, satisfying $|\delta| \leq \rho\,\sigma^2$ by Assumption~\ref{assump:weak-dep}. Dividing both sides by $n_{i,t}$:
\begin{equation}
  \mathrm{Var}\!\left(\frac{D_{i,t}}{\sqrt{n_{i,t}}}\right)
  = \sigma^2 + \frac{2\binom{n_{i,t}}{2}}{n_{i,t}}\,\delta
  = \sigma^2 + (n_{i,t} - 1)\,\delta.
\end{equation}
When $\rho \to 0$, $\delta \to 0$ and the rescaled variance converges to $\sigma^2$ uniformly across all turn positions.
\end{proof}

\paragraph{Interpretation.} The rescaled advantage $D_{i,t}/\sqrt{n_{i,t}}$ has approximately constant variance $\sigma^2$ across all turn depths, ensuring that no position receives disproportionately large or small gradient signals solely due to the number of accumulated terms. Without the $\sqrt{n_{i,t}}$ divisor, $\mathrm{Var}(D_{i,t})$ would grow linearly in $n_{i,t}$ (or quadratically when correlations are non-negligible), causing shallow turns to dominate the policy update. Dividing by $n_{i,t}$ instead would over-correct, compressing the cumulative signal at shallow positions and erasing the informational advantage of long-horizon credit propagation. The $\sqrt{n_{i,t}}$ rescaling is the standard CLT-motivated choice that balances these extremes.


\subsection{Gradient Modulation under Adaptive Clipping}
\label{sec:theory-adaclip}

The IG-adaptive clip scale $c_{i,t} = 1 + \beta\,(2\sigma(\widehat{\mathrm{ig}}_{i,t}) - 1)$ (Eq.~\eqref{eq:clip-scale}) modulates the effective clipping bounds on a per-turn basis. Since $\sigma(\cdot)$ is strictly monotonically increasing and maps $\mathbb{R} \to (0, 1)$, the scale $c_{i,t}$ is a strictly increasing function of $\widehat{\mathrm{ig}}_{i,t}$, bounded in $(1-\beta,\; 1+\beta)$.

In the clipped policy objective (Eq.~\eqref{eq:atgpo-loss}), when the turn-level ratio $s_{i,t}(\theta)$ exceeds the clipping boundary, the effective gradient contribution of turn $(i, t)$ is constrained by the clip width. For a turn with positive advantage $\widehat{A}_{i,t} > 0$, the upper clipping bound $1 + c_{i,t}\,\epsilon_{\mathrm{high}}$ determines the maximum ratio; for a turn with negative advantage, the lower bound $1 - c_{i,t}\,\epsilon_{\mathrm{low}}$ applies. In both cases, the magnitude of the clipped gradient is proportional to $c_{i,t}\,\epsilon\,|\widehat{A}_{i,t}|$. Since $c_{i,t}$ increases with $\widehat{\mathrm{ig}}_{i,t}$:

\begin{itemize}[nosep, leftmargin=1.5em]
  \item Turns with high information gain ($\widehat{\mathrm{ig}}_{i,t} \gg 0$) receive $c_{i,t} \to 1 + \beta$, widening the trust region and permitting larger policy updates toward actions that demonstrably improved the policy's belief about the correct answer.
  \item Turns with low or negative information gain ($\widehat{\mathrm{ig}}_{i,t} \ll 0$) receive $c_{i,t} \to 1 - \beta$, narrowing the trust region and suppressing updates from turns where tool use provided little or adverse evidence.
\end{itemize}

This selective modulation implements a form of \emph{per-turn trust}: the optimizer allocates a larger step budget to turns whose process signal is informative and a smaller budget to turns where the signal is unreliable, all within the hard bounds guaranteed by the sigmoid saturation. Importantly, the clip scale $c_{i,t}$ is derived from the \emph{normalized} IG $\widehat{\mathrm{ig}}_{i,t}$ (a zero-mean, unit-variance relative ranking within the turn group) rather than from the advantage $\widehat{A}_{i,t}$ itself. This decouples the clipping range modulation from the gradient magnitude: $c_{i,t}$ reflects whether the turn was relatively informative among its positional peers, not whether the overall advantage is large. Consequently, there is no compounding effect between a large advantage and a wide clipping range.

\section{Case Study}
Table~\ref{casestudy1} to Table~\ref{casestudy7} present qualitative examples drawn from all seven evaluation benchmarks. Each table displays a single trajectory produced by A\textsuperscript{2}TGPO during validation, where distinct color annotations mark the model reasoning, search queries, retrieved passages and final answers to visualize the full agentic workflow. A clear behavioral contrast emerges between the two task categories: on multi-hop benchmarks, the agent decomposes the question into sequential sub-goals and launches multiple turns of tool call before synthesizing the answer, whereas on single-hop benchmarks a single well-targeted query typically suffices to locate the required evidence.


\begin{table}[!t]
    \centering
    \caption{An example from A\textsuperscript{2}TGPO on HotpotQA dataset, with special symbols used in think content, search queries, returned results and final answer highlighted with \blue{blue box}, \red{red box}, \green{green box} and \purple{purple box}, respectively.}
    \fontsize{9pt}{11pt}\selectfont
    \begin{tabular}{@{}p{\textwidth}@{}}
    \midrule
        \rowcolor{gray!20}\textbf{Example \#1 from HotpotQA Dataset} \\
    \midrule
        \textbf{Question:} \\
        Which Italian classical composer, conductor, and teacher born in 1750 composed this four-movement serenade in B-flat major for five instruments (2 oboes, 2 horns and 1 bassoon)?\\
        \textbf{Labeled Answer:} \\
        Antonio Salieri \\
    \midrule
        \rowcolor{gray!20}
        \textbf{Model Output by A\textsuperscript{2}TGPO(Qwen3-8B)}\\
    \midrule
        \blue{<think>} Okay, let's try to figure out which Italian classical composer, conductor, and teacher born in 1750 composed the four-movement serenade in B-flat major for five instruments. First, I need to confirm the composer's details. \blue{</think>}
        \\\red{<search>} Italian classical composer, conductor, and teacher born in 1750 \red{</search>}
        \\\green{<result>}\\
Page 1: "Antonio Salieri" (; 18 August 1750 -- 7 May 1825) was an Italian classical composer, conductor, and teacher. He was born in Legnago, south of Verona, in the Republic of Venice, and spent his adult life and career as a subject of the Habsburg Monarchy. Salieri was a pivotal figure in the development of late 18th-century opera. As a student of Florian Leopold Gassmann, and a protégé of Christoph Willibald Gluck, Salieri was a cosmopolitan composer who wrote operas in three languages. Salieri helped to develop and shape many of the features of operatic compositional vocabulary.\\
Page 2: "Antonio Salieri" ...Salieri's life, and especially his relationship with Mozart, has been a subject of many stories, in a variety of media. Antonio Salieri (; 18 August 1750 -- 7 May 1825) was an Italian classical composer, conductor, and teacher. He was born in Legnago, south of Verona, in the Republic of Venice, and spent his adult life and career as a subject.\\
Page 3: "Giacomo Facco" (4 February 1676 -- 16 February 1753) was an Italian Baroque violinist, conductor and composer. One of the most famous Italian composers of his day, he was completely forgotten until 1962, when his work was discovered by composer, conductor, and musicologist Uberto Zanolli.\textbf{***Remaining search results omitted here***}
        \\\green{</result>}
        \\\blue{<think>} Okay, so the composer is Antonio Salieri. Now, the next step is to check what serenade in B-flat major for five instruments he composed. Let me try another search. \blue{</think>}
        \\\red{<search>} Antonio Salieri four-movement serenade in B-flat major for five instruments \red{</search>}
        \\\green{<result>}\\
Page 1: "Picciola serenata" ...Salieri resurged in popularity, namely due to the depiction of Salieri in Peter Shaffer's 1979 play `Amadeus', which was later adapted into the film by the same name. `Picciola Serenata in B-flat major' is scored for 2 oboes, 2 horns and 1 bassoon. The serenade is in four movements as follows: The opening movement begins with a stately entrance in B flat major in which the two oboes present the motive while the bassoon and two horns provide a strong foundation of the tonic chord on the downbeat.\\
Page 2: "Picciola serenata" Picciola serenata (little serenade) in B-flat major is a serenade written by Italian composer Antonio Salieri scored for five instruments: 2 oboes, 2 horns and 1 bassoon. The piece was composed in 1778 and consists of four movements. \textbf{***Remaining search results omitted here***}
        \\\green{</result>}
        \\\blue{<think>} The search result confirms that the Picciola Serenata in B-flat major is by Antonio Salieri, scored for 2 oboes, 2 horns and 1 bassoon, consisting of four movements. Putting it all together, the composer is Antonio Salieri. \blue{</think>}
        \\\purple{<answer>} \boxed{\text{Antonio Salieri}} \purple{</answer>}\\
    \bottomrule

    \label{casestudy1}
    \end{tabular}
\end{table}

\begin{table}[!t]
    \centering
    \caption{An example from A\textsuperscript{2}TGPO on 2WikiMultihopQA dataset, with special symbols used in think content, search queries, returned results and final answer highlighted with \blue{blue box}, \red{red box}, \green{green box} and \purple{purple box}, respectively.}
    \fontsize{9pt}{11pt}\selectfont
    \begin{tabular}{@{}p{\textwidth}@{}}
    \midrule
        \rowcolor{gray!20}\textbf{Example \#2 from 2WikiMultihopQA Dataset} \\
    \midrule
        \textbf{Question:} \\
        What nationality is the director of film Star Trek V: The Final Frontier?\\
        \textbf{Labeled Answer:} \\
        Canadian \\
    \midrule
        \rowcolor{gray!20}
        \textbf{Model Output by A\textsuperscript{2}TGPO(Qwen3-8B)}\\
    \midrule
        \blue{<think>} Okay, let's try to figure out what nationality the director of Star Trek V: The Final Frontier is. First, I need to confirm who the director of that film is. \blue{</think>}
        \\\red{<search>} Who directed Star Trek V: The Final Frontier? \red{</search>}
        \\\green{<result>}\\
Page 1: "Star Trek V: The Final Frontier" Star Trek V: The Final Frontier is a 1989 American science fiction film directed by William Shatner and based on the created by Gene Roddenberry. It is the fifth installment in the "Star Trek" film series. Taking place shortly after the events of (1986), its plot follows the crew of the USS "Enterprise"-A as they confront a renegade Vulcan, Sybok, who is searching for God at the center of the galaxy. The film was directed by cast member William Shatner, following two films being directed by his co-star Leonard Nimoy. Shatner also developed...\\
Page 2: "Star Trek V: The Final Frontier" ...Star Trek V: The Final Frontier is a 1989 American science fiction film directed by William Shatner and based on the created by Gene Roddenberry. It is the fifth installment in the "Star Trek" film series...\textbf{***Remaining search results omitted here***}
        \\\green{</result>}
        \\\blue{<think>} Okay, so the director is William Shatner. Now, I need to check his nationality. Let me look that up. \blue{</think>}
        \\\red{<search>} William Shatner's nationality \red{</search>}
        \\\green{<result>}\\
Page 1: "William Shatner" ...Shatner has also worked as a musician; an author; screenwriter and director; celebrity pitchman; and a passionate owner, trader, breeder, rider, and aficionado of horses. Shatner was born in the Notre-Dame-de-Grâce neighbourhood of Montréal, Québec, Canada, to a Conservative Jewish household. His parents are Ann (née Garmaise) and Joseph Shatner, a clothing manufacturer. \\
Page 2: "William Shatner" William Shatner, (born March 22, 1931) is a Canadian actor, author, producer, director and singer. In his seven decades of television, Shatner became a cultural icon for his portrayal of James T. Kirk, captain of the USS "Enterprise", in the "Star Trek" franchise. He has written a series of books chronicling his experiences playing Captain Kirk and being a part of "Star Trek", and has co-written several novels set in the "Star Trek" universe.\\
Page 3: "William Shatner" ...Through PETA, Shatner asked that the 2018 cruise not offer any "swim with dolphins" experiences, explaining in a letter to the Norwegian Cruise Line's CEO that "The exploitation of any species for profit and entertainment would have violated the Prime Directive." Shatner has been in over 20 films and 30 television shows along with video games...\textbf{***Remaining search results omitted here***}
        \\\green{</result>}
        \\\blue{<think>} According to the search results, William Shatner was born in Montréal, Québec, Canada and is identified as a Canadian actor. The director of Star Trek V: The Final Frontier is William Shatner, who is Canadian. \blue{</think>}
        \\\purple{<answer>} \boxed{\text{Canadian}} \purple{</answer>}\\
    \bottomrule

    \label{casestudy2}
    \end{tabular}
\end{table}

\begin{table}[!t]
    \centering
    \caption{An example from A\textsuperscript{2}TGPO on MuSiQue dataset, with special symbols used in think content, search queries, returned results and final answer highlighted with \blue{blue box}, \red{red box}, \green{green box} and \purple{purple box}, respectively.}
    \fontsize{9pt}{11pt}\selectfont
    \begin{tabular}{@{}p{\textwidth}@{}}
    \midrule
        \rowcolor{gray!20}\textbf{Example \#3 from MuSiQue Dataset} \\
    \midrule
        \textbf{Question:} \\
        Who is the spouse of the actor that plays Vision in Avengers: Age of Ultron?\\
        \textbf{Labeled Answer:} \\
        Jennifer Connelly \\
    \midrule
        \rowcolor{gray!20}
        \textbf{Model Output by A\textsuperscript{2}TGPO(Qwen3-8B)}\\
    \midrule
        \blue{<think>} Okay, let's try to figure out who the spouse of the actor who plays Vision in Avengers: Age of Ultron is. First, I need to confirm who the actor is. \blue{</think>}
        \\\red{<search>} Actor who plays Vision in Avengers: Age of Ultron \red{</search>}
        \\\green{<result>}\\
Page 1: "Paul Bettany" Paul Bettany (born 27 May 1971) is an English actor. He is known for his voice role as J.A.R.V.I.S. and as the Vision in the Marvel Cinematic Universe, specifically the films "Iron Man" (2008), "Iron Man 2" (2010), "The Avengers" (2012), "Iron Man 3" (2013), (2015), (2016), (2018), and (2019). He first came to the attention of mainstream audiences when he appeared in the British film "Gangster No. 1" (2000), and director Brian Helgeland's film "A Knight's Tale" (2001).\\
Page 2: "Vision (Marvel Comics)" The Vision is a fictional superhero appearing in American comic books published by Marvel Comics. He is an android and a member of the Avengers who first appeared in "The Avengers" \#57 (October 1968). The character is portrayed by Paul Bettany in the Marvel Cinematic Universe in the 2015 film, the 2016 film, and the 2018 film...\textbf{***Remaining search results omitted here***}
        \\\green{</result>}
        \\\blue{<think>} Okay, so the actor is Paul Bettany. Now, the next step is to find out who his spouse is. Let me check that. \blue{</think>}
        \\\red{<search>} Paul Bettany's spouse \red{</search>}
        \\\green{<result>}\\
Page 1: "Paul Bettany" ...He has been nominated for various awards, including BAFTA Award for Best Actor in a Supporting Role and a Screen Actors Guild Award for Outstanding Performance by a Cast in a Motion Picture. Bettany is married to American actress Jennifer Connelly, with whom he has two children. His most commercially successful films have been "Avengers: Infinity War," which has grossed over US\$2 billion, "The Avengers," which grossed over US\$1.5 billion...\\
Page 2: "Paul Bettany" On 1 January 2003, Bettany married American actress Jennifer Connelly in Scotland; they met when they starred together in "A Beautiful Mind." They did not become a couple until some time after the film, as both were in other relationships during filming. She was his childhood crush since he first saw her as a teenager in the film "Labyrinth".\\
Page 3: "Jennifer Connelly" On January 1, 2003, in a private family ceremony in Scotland she married actor Paul Bettany, whom she had met while working on "A Beautiful Mind". The couple's first child Stellan was born on August 5 the same year...\textbf{***Remaining search results omitted here***}
        \\\green{</result>}
        \\\blue{<think>} According to the search results, Paul Bettany is married to American actress Jennifer Connelly. They met while starring together in "A Beautiful Mind" and married on January 1, 2003. Therefore, the spouse of the actor who plays Vision is Jennifer Connelly. \blue{</think>}
        \\\purple{<answer>} \boxed{\text{Jennifer Connelly}} \purple{</answer>}\\
    \bottomrule

    \label{casestudy3}
    \end{tabular}
\end{table}

\begin{table}[!t]
    \centering
    \caption{An example from A\textsuperscript{2}TGPO on Bamboogle dataset, with special symbols used in think content, search queries, returned results and final answer highlighted with \blue{blue box}, \red{red box}, \green{green box} and \purple{purple box}, respectively.}
    \fontsize{9pt}{11pt}\selectfont
    \begin{tabular}{@{}p{\textwidth}@{}}
    \midrule
        \rowcolor{gray!20}\textbf{Example \#4 from Bamboogle Dataset} \\
    \midrule
        \textbf{Question:} \\
        What's the capital of the state that the Space Needle is at?\\
        \textbf{Labeled Answer:} \\
        Olympia \\
    \midrule
        \rowcolor{gray!20}
        \textbf{Model Output by A\textsuperscript{2}TGPO(Qwen3-8B)}\\
    \midrule
        \blue{<think>} Okay, let's try to figure out what the capital of the state where the Space Needle is located. First, I need to confirm where the Space Needle is. \blue{</think>}
        \\\red{<search>} Where is the Space Needle located? \red{</search>}
        \\\green{<result>}\\
Page 1: "Space Needle" The Space Needle is an observation tower in Seattle, Washington, a landmark of the Pacific Northwest and an icon of Seattle. It was built in the Seattle Center for the 1962 World's Fair, which drew over 2.3 million visitors. Nearly 20,000 people a day used its elevators. Once the tallest structure west of the Mississippi River, it is high, wide, and weighs. It is built to withstand winds of up to and earthquakes of up to 9.0 magnitude, as strong as the 1700 Cascadia earthquake.\\
Page 2: "Space Needle" ...The Space Needle is an observation tower in Seattle, Washington, a landmark of the Pacific Northwest and an icon of Seattle. It was built in the Seattle Center for the 1962 World's Fair...\\
Page 3: "Space Needle" ...deck at and the rotating SkyCity restaurant at. The downtown Seattle skyline, as well as the Olympic and Cascade Mountains, Mount Rainier, Mount Baker, Elliott Bay and surrounding islands can be viewed from the top of the Needle...\textbf{***Remaining search results omitted here***}
        \\\green{</result>}
        \\\blue{<think>} Okay, so the Space Needle is in Seattle, Washington. Now, the next step is to find out what the capital of Washington state is. Let me check that. \blue{</think>}
        \\\red{<search>} What is the capital of Washington state? \red{</search>}
        \\\green{<result>}\\
Page 1: "Washington State Capitol" ...The Washington State Capitol or "Legislative Building" in Olympia is the home of the government of the state of Washington. It contains chambers for the Washington State Legislature and offices for the governor, lieutenant governor, secretary of state and treasurer and is part of a campus consisting of several buildings.\\
Page 2: "Washington State Capitol" ...After Olympia became capital city of the Washington Territory in 1853, the city's founder, Edmund Sylvester, gave the legislature of land upon which to build the capitol...\\
Page 3: "Washington (state)" Washington, officially the State of Washington, is a state in the Pacific Northwest region of the United States. Named for George Washington, the first president of the United States, the state was made out of the western part of the Washington Territory, which was ceded by Britain in 1846 in accordance with the Oregon Treaty in the settlement of the Oregon boundary dispute. It was admitted to the Union as the 42nd state in 1889. Olympia is the state capital; the state's largest city is Seattle...\textbf{***Remaining search results omitted here***}
        \\\green{</result>}
        \\\blue{<think>} The search results confirm that Olympia is the capital of Washington state. Putting it all together: the Space Needle is in Seattle, Washington, and the capital of Washington is Olympia. \blue{</think>}
        \\\purple{<answer>} \boxed{\text{Olympia}} \purple{</answer>}\\
    \bottomrule

    \label{casestudy4}
    \end{tabular}
\end{table}

\begin{table}[!t]
    \centering
    \caption{An example from A\textsuperscript{2}TGPO on Natural Questions (NQ) dataset, with special symbols used in think content, search queries, returned results and final answer highlighted with \blue{blue box}, \red{red box}, \green{green box} and \purple{purple box}, respectively.}
    \fontsize{9pt}{11pt}\selectfont
    \begin{tabular}{@{}p{\textwidth}@{}}
    \midrule
        \rowcolor{gray!20}\textbf{Example \#5 from Natural Questions (NQ) Dataset} \\
    \midrule
        \textbf{Question:} \\
        Where is the World Cup being held 2018?\\
        \textbf{Labeled Answer:} \\
        Russia \\
    \midrule
        \rowcolor{gray!20}
        \textbf{Model Output by A\textsuperscript{2}TGPO(Qwen3-8B)}\\
    \midrule
        \blue{<think>} Okay, so I need to find out where the 2018 World Cup was held. First, I should recall what I know. The World Cup is a major international football (soccer) tournament. I think the 2018 World Cup was in Russia. But wait, I should verify this to be sure. Let me check Wikipedia. \blue{</think>}
        \\\red{<search>} When was the 2018 FIFA World Cup held and where was it held? \red{</search>}
        \\\green{<result>}\\
Page 1: "2018 FIFA World Cup" The 2018 FIFA World Cup was the 21st FIFA World Cup, an international football tournament contested by the men's national teams of the member associations of FIFA once every four years. It took place in Russia from 14 June to 15 July 2018. It was the first World Cup to be held in Eastern Europe, and the 11th time that it had been held in Europe. At an estimated cost of over \$14.2 billion, it was the most expensive World Cup. It was also the first World Cup to use the video assistant referee (VAR) system.\\
Page 2: "2018 FIFA World Cup" ...The 2018 FIFA World Cup was the 21st FIFA World Cup, an international football tournament contested by the men's national teams of the member associations of FIFA once every four years. It took place in Russia from 14 June to 15 July 2018. It was the first World Cup to be held in Eastern Europe, and the 11th time that it had been held in Europe. At an estimated cost of over \$14.2 billion, it was the most expensive World Cup...\\
Page 3: "2018 FIFA World Cup Final" The 2018 FIFA World Cup Final was a football match that took place on 15 July 2018 to determine the winners of the 2018 FIFA World Cup. It was the final of the 21st FIFA World Cup, a quadrennial tournament contested by the men's national teams of the member associations of FIFA. The match was contested by France and Croatia, and held at the Luzhniki Stadium in Moscow, Russia. Before 2018, France's only World Cup victory was in 1998, while Croatia were playing in their first ever World Cup final...\textbf{***Remaining search results omitted here***}
        \\\green{</result>}
        \\\blue{<think>} The search results consistently confirm that the 2018 FIFA World Cup was held in Russia from June 14 to July 15, 2018. It was the first World Cup hosted in Eastern Europe, and the 11th held in Europe overall. The final match took place at the Luzhniki Stadium in Moscow, contested by France and Croatia. So the answer is Russia. \blue{</think>}
        \\\purple{<answer>} \boxed{\text{Russia}} \purple{</answer>}\\
    \bottomrule

    \label{casestudy5}
    \end{tabular}
\end{table}

\begin{table}[!t]
    \centering
    \caption{An example from A\textsuperscript{2}TGPO on TriviaQA dataset, with special symbols used in think content, search queries, returned results and final answer highlighted with \blue{blue box}, \red{red box}, \green{green box} and \purple{purple box}, respectively.}
    \fontsize{9pt}{11pt}\selectfont
    \begin{tabular}{@{}p{\textwidth}@{}}
    \midrule
        \rowcolor{gray!20}\textbf{Example \#6 from TriviaQA Dataset} \\
    \midrule
        \textbf{Question:} \\
        Which band released the albums Sheer Heart Attack and A Day at the Races?\\
        \textbf{Labeled Answer:} \\
        Queen \\
    \midrule
        \rowcolor{gray!20}
        \textbf{Model Output by A\textsuperscript{2}TGPO(Qwen3-8B)}\\
    \midrule
        \blue{<think>} Okay, so I need to find out which band released the albums "Sheer Heart Attack" and "A Day at the Races." First, I should recall what I know. I think these are classic rock albums, but I'm a bit fuzzy on the exact band. Maybe I should check Wikipedia for information on these albums. \blue{</think>}
        \\\red{<search>} Who released the albums "Sheer Heart Attack" and "A Day at the Races"? \red{</search>}
        \\\green{<result>}\\
Page 1: "Sheer Heart Attack" Sheer Heart Attack is the third studio album by the British rock band Queen, released on 8 November 1974 by EMI Records in the United Kingdom and by Elektra Records in the United States. Digressing from the progressive themes featured on their first two albums, this album featured more pop centric and conventional rock tracks and marked a step towards the "classic" Queen sound. It was produced by the band and Roy Thomas Baker and launched Queen to mainstream popularity in the UK and throughout the world. After the release of "Queen II", the group embarked on...\\
Page 2: "A Day at the Races (album)" A Day at the Races is the fifth studio album by the British rock band Queen, released on 10 December 1976 by EMI Records in the United Kingdom and by Elektra Records in the United States. It was the band's first completely self-produced album, and the first not to feature producer Roy Thomas Baker. Recorded at Sarm East, The Manor and Wessex Studios in England, "A Day at the Races" was engineered by Mike Stone. The album serves as a companion album to the band's previous album, "A Night at the Opera", both taking...\textbf{***Remaining search results omitted here***}
        \\\green{</result>}
        \\\blue{<think>} The search results clearly confirm that both "Sheer Heart Attack" (1974, third studio album) and "A Day at the Races" (1976, fifth studio album) were released by the British rock band Queen. The first album marked a step towards the classic Queen sound, while the second served as a companion to "A Night at the Opera." So the answer is Queen. \blue{</think>}
        \\\purple{<answer>} \boxed{\text{Queen}} \purple{</answer>}\\
    \bottomrule

    \label{casestudy6}
    \end{tabular}
\end{table}

\begin{table}[!t]
    \centering
    \caption{An example from A\textsuperscript{2}TGPO on PopQA dataset, with special symbols used in think content, search queries, returned results and final answer highlighted with \blue{blue box}, \red{red box}, \green{green box} and \purple{purple box}, respectively.}
    \fontsize{9pt}{11pt}\selectfont
    \begin{tabular}{@{}p{\textwidth}@{}}
    \midrule
        \rowcolor{gray!20}\textbf{Example \#7 from PopQA Dataset} \\
    \midrule
        \textbf{Question:} \\
        Who is the author of The Reader?\\
        \textbf{Labeled Answer:} \\
        Bernhard Schlink \\
    \midrule
        \rowcolor{gray!20}
        \textbf{Model Output by A\textsuperscript{2}TGPO(Qwen3-8B)}\\
    \midrule
        \blue{<think>} Okay, I need to find out who the author of "The Reader" is. First, I should recall what I know. I think "The Reader" is a novel, but I'm a bit fuzzy on the author. Maybe I should check Wikipedia for more information. \blue{</think>}
        \\\red{<search>} Who wrote the novel "The Reader"? \red{</search>}
        \\\green{<result>}\\
Page 1: "The Reader" The Reader ("Der Vorleser") is a novel by German law professor and judge Bernhard Schlink, published in Germany in 1995 and in the United States in 1997. The story is a parable, dealing with the difficulties post-war German generations have had comprehending the Holocaust; Ruth Franklin writes that it was aimed specifically at the generation Bertolt Brecht called the "Nachgeborenen", those who came after. Like other novels in the genre of "Vergangenheitsbewältigung", the struggle to come to terms with the past, "The Reader" explores how the post-war generations should approach the generation that took part in, or witnessed...\\
Page 2: "The Reader (2008 film)" The Reader is a 2008 German-American romantic drama film directed by Stephen Daldry and written by David Hare, based on the 1995 German novel of the same name by Bernhard Schlink. Ralph Fiennes and Kate Winslet star along with the young actor David Kross. It was the last film for producers Anthony Minghella and Sydney Pollack, both of whom died prior to its release. Production began in Germany in September 2007, and the film opened in limited release on December 10, 2008. The film tells the story of Michael Berg, a German lawyer who, as a...\textbf{***Remaining search results omitted here***}
        \\\green{</result>}
        \\\blue{<think>} The search results clearly state that "The Reader" ("Der Vorleser") is a novel by German law professor and judge Bernhard Schlink, published in 1995. The novel deals with the difficulties post-war German generations have had comprehending the Holocaust. The 2008 film adaptation directed by Stephen Daldry, starring Ralph Fiennes and Kate Winslet, also confirms Bernhard Schlink as the author of the original novel. So the answer is Bernhard Schlink. \blue{</think>}
        \\\purple{<answer>} \boxed{\text{Bernhard Schlink}} \purple{</answer>}\\
    \bottomrule

    \label{casestudy7}
    \end{tabular}
\end{table}



\end{document}